\documentclass{article}
\usepackage{hyperref}
\usepackage{graphicx}
\usepackage[ruled,vlined]{algorithm2e}

\usepackage{amsmath}   
\usepackage{amssymb}   
\usepackage{amsthm} 
\usepackage[numbers]{natbib}
\usepackage{tabularx}
\usepackage{makecell}
\usepackage{booktabs}
\usepackage{multirow}
\usepackage{caption}
\usepackage{array} 
\usepackage{arydshln} 
\usepackage{float} 

\usepackage{geometry}
\usepackage{mathtools}
\usepackage{bm}
\usepackage{xcolor}

\usepackage{graphbox}
\usepackage{enumitem}
\usepackage{titlesec}
\titlespacing*{\paragraph}{0pt}{0.6ex plus 0.2ex minus 0.1ex}{0.5em}
\newcommand{\inlinesubsubsections}{%
  \titleformat{\subsubsection}[runin]
    {\normalfont\normalsize\bfseries}{\thesubsubsection}{0.75em}{}[.]%
  \titlespacing*{\subsubsection}{0pt}{0.9ex plus 0.2ex minus 0.1ex}{0.6em}}
\newcommand{\displaysubsubsections}{%
  \titleformat{\subsubsection}
    {\normalfont\normalsize\bfseries}{\thesubsubsection}{1em}{}%
  \titlespacing*{\subsubsection}{0pt}{3.25ex plus 1ex minus .2ex}{1.5ex plus .2ex}}
\usepackage{changepage}

\newcommand{\argmin}[1]{\underset{#1}{\operatorname{arg}\,\operatorname{min}}\;}

\newtheorem{theorem}{Theorem}[section]
\newtheorem{corollary}{Corollary}[section]

\newtheorem{proposition}[theorem]{Proposition}
\newtheorem{definition}{Definition}[section]

\newtheorem{proposition*}[theorem]{Proposition}

\renewcommand{\arraystretch}{1.2} 

\newcommand{\lb}{\left(}
\newcommand{\rb}{\right)}

\DeclareMathOperator{\diag}{diag}

\newcommand{\mycomment}[1]{}

\title{Low-Rank Dependence Decomposition via Accelerated\\ Symmetric Non-negative Matrix Factorization}
\author{\normalsize Lavinia Ghita\footnote{Correspondence to \texttt{lghita@nvidia.com}}, Dhruv Desai, Jake Goldberg, Roman Yokunda Enzmann}

\date{NVIDIA}

\makeatletter
\def\@maketitle{%
  \newpage
  \null
  \vskip 0.5em%
  \begin{center}%
  \let \footnote \thanks
    {\LARGE \@title \par}%
    \vskip 1.5em%
    {\large
      \lineskip .5em%
      \begin{tabular}[t]{c}%
        \@author
      \end{tabular}\par}%
    \vskip 1em%
    {\large \@date}%
  \end{center}%
  \par
  \vskip 1.5em}
\makeatother

\begin{document}

\maketitle

\renewenvironment{abstract}
  {\small\begin{center}\bfseries Abstract\end{center}\begin{adjustwidth}{1.88cm}{1.88cm}\noindent}
  {\end{adjustwidth}}

\begin{abstract}
Symmetric non-negative matrix factorization (SymNMF) recovers latent group structure from a dependence matrix, but its dense, quadratic-memory objective has confined prior work to moderate sizes. We present a large-scale GPU study of seven algorithm families (over 30 configurations) on absolute Pearson correlation and tail pairwise dependence matrices from Extreme Value Theory, two proxies for empirical risk-factor estimation on large portfolios. A trace-identity reformulation eliminates all $n \times n$ intermediates, so a single GPU reaches $n \approx 10^5$ and multi-node distribution scales to $n = 10^6$ and beyond. Under a two-phase protocol, eleven methods converge at moderate scale; six remain efficient enough at $n = 10^5$ (five AdaGrad-family plus ADMM), and five AdaGrad-family methods still converge at $n = 10^6$: AdaGrad, RMSprop, and three we introduce (Piecewise AdaGrad, Row-Stochastic SVRG, Block-SVRG AdaptGrow). At $n = 10^6$ the fastest solver tracks the matrix spectrum: Block-SVRG AdaptGrow wins on the flat, ill-conditioned tail-dependence spectrum, where its lower per-iteration cost decides a long factorization, and full-batch AdaGrad wins on the dominant-low-rank correlation spectrum, where the run is short. We also benchmark spherical K-means as a hard-label baseline: cheaper when angular cluster structure is present, yet provably degenerate once the matrix collapses toward a single common factor, where the soft factorization remains necessary.
\end{abstract}

\section{Introduction}

Given a symmetric matrix $S \in \mathbb{R}_+^{n \times n}$, symmetric non-negative matrix factorization (SymNMF) seeks $H \in \mathbb{R}_+^{n \times k}$ with $k \ll n$ such that $S \approx HH^\top$, revealing latent groups whose members share common dependence structure; $H$ serves as both a soft clustering and a learned embedding. We study two input types: absolute Pearson correlation matrices, capturing dependence across the full distribution, and tail pairwise dependence matrices (TPDM) \cite{Cooley2019}, grounded in Extreme Value Theory \cite{Coles2001,resnick2010heavy}, capturing dependence conditional on extreme events. In financial risk, such factor models are a core tool: correlation-based factors summarize broad market dynamics, while TPDM-based factors isolate the co-dependencies that drive systemic losses. The same pair also spans complementary spectra for the solver study, from correlation matrices with a clean signal--noise eigenvalue gap to tail matrices whose mass concentrates in a near-dominant common factor, so comparisons are not tied to a single landscape (Section~\ref{snmf:spectral}). Realistic portfolios span thousands to millions of instruments, so both need a factorization that scales.

Although SymNMF is well-established as a factorization and clustering problem \cite{kuang2012symmetric,kuang2015symnmf}, practical deployment at scale faces three gaps. First, there is no systematic comparison of SymNMF solver families on realistic inputs with controlled ground truth, and the $O(n^2)$ memory of the dense objective has kept prior work at moderate sizes. Second, the interaction between constraint handling (projection vs.\ softplus) and adaptive optimizers is unexplored. Third, methods that work at moderate scale still struggle on the long runs that large-$n$ SymNMF requires: adaptive steps either forget useful history or saturate as accumulators grow.

The paper addresses each gap and yields a recommendation: which solver to use for a given input type and scale.
First, we present a systematic GPU evaluation of seven algorithm families (over 30 configurations) on correlation and TPDM inputs from $n = 10^2$ to $10^6$. A trace-identity reformulation eliminates all $n \times n$ intermediates, so a single GPU handles $n$ up to ${\sim}10^5$ and multi-node distribution scales to $n = 10^6$ and beyond. At large scale a small set of adaptive first-order methods dominates the reliability--efficiency frontier, with the fastest depending on the matrix type (Section~\ref{results:phase2}).
Second, we introduce softplus reparameterization of the non-negativity constraint and characterize when it helps: it aids quasi-Newton at small scale but corrupts long-memory adaptive accumulators as $n$ grows, so the large-scale campaign runs in projected space.
Third, motivated by that tension, we use diagonal AdaGrad \cite{duchi2011adagrad}, whose full-history accumulator suits SymNMF's stationary objective but saturates on long runs, and propose three empirical extensions. \emph{Piecewise AdaGrad} resets the accumulator on objective stagnation and recalibrates per entry to the current gradient scale; \emph{Row-Stochastic SVRG} and \emph{Block-SVRG AdaptGrow} adapt variance reduction and adaptive batching to SymNMF to cut per-iteration cost while retaining universal convergence (Section~\ref{snmf:perf}).
Additionally, we connect orthogonality-constrained SymNMF to spherical $K$-means \cite{ding2005,kuang2015symnmf} and apply it on the rows of~$S$ as the practical discrete baseline when only cluster assignments are needed. At scale it is markedly cheaper when angular structure is present, but provably degenerate under a dominant common factor, where soft factorization remains necessary.

The remainder is organized as follows: Section~\ref{snmf} formulates the SymNMF problem and surveys solver families; Section~\ref{maths} constructs the two input types (correlation and TPDM) and their shared low-rank structure; Section~\ref{methodology} details the methodology; Section~\ref{results} presents Phase~1 ($n \le 10^4$) and Phase~2 ($n \ge 10^5$) results; Section~\ref{clustering} covers hard clustering alternatives; Section~\ref{related} reviews related work; Section~\ref{conclusions} concludes. Proofs and supplementary material are in the Appendix.

\section{Symmetric Non-negative Matrix Factorization}\label{snmf}

This section formulates the factorization and the solvers we benchmark: the objective and identifiability below, a memory-efficient loss in Section~\ref{snmf:loss}, seven algorithm families in Section~\ref{snmf:algos}, and projection versus softplus in Section~\ref{snmf:logspace}.

Symmetric non-negative matrix factorization (SymNMF) \cite{kuang2012symmetric,kuang2015symnmf} specializes NMF \cite{lee1999learning,lee2001algorithms} to the symmetric case. Given an $n \times n$ symmetric, entrywise non-negative dependence matrix~$S$ (Section~\ref{maths}), we seek
\begin{equation}\label{eq:symnmf}
\min_{H \geq 0}\, f(H),\qquad f(H) = \left\Vert S - HH^\top \right\Vert ^2_F,
\end{equation}
where $H \in \mathbb{R}_+^{n \times k}$ with $k \ll n$.
Figure~\ref{fig:symnmf} illustrates the decomposition for a small example.

\begin{figure}[ht]
    \centering
    \includegraphics[width=0.83\textwidth]{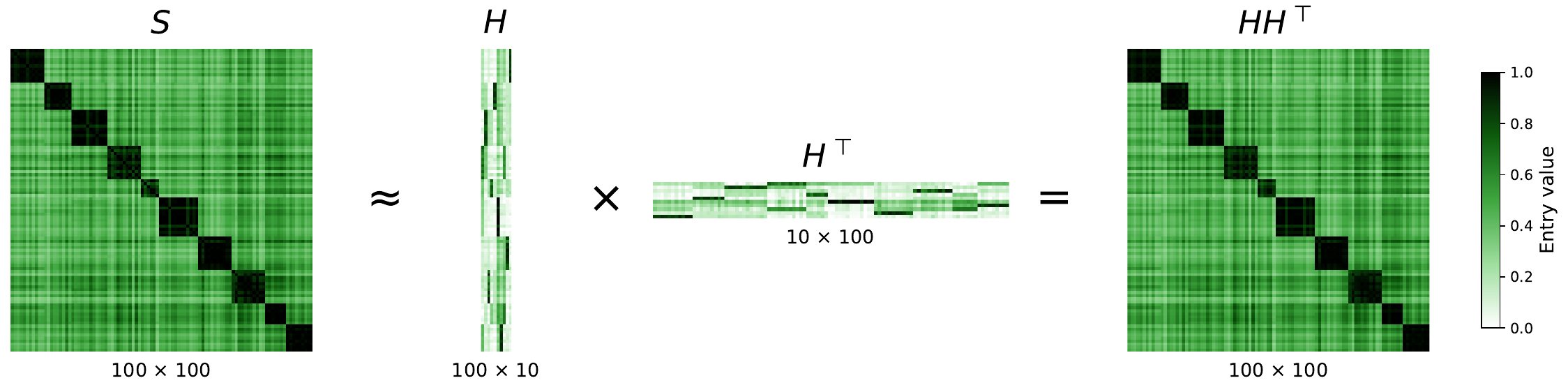}
    \caption{SymNMF on a $100 \times 100$ benchmark dependence matrix ($k = 10$). Left: input $S$; right: reconstruction $HH^\top$ via $H \in \mathbb{R}_+^{100 \times 10}$. The block structure of $S$ is recovered in the columns of $H$ (relative Frobenius residual in this example $\|S - HH^\top\|_F / \|S\|_F = 1.6\%$).}
    \label{fig:symnmf}
\end{figure}

\paragraph{Identifiability.}
The SymNMF factorization is not unique: if $H^*$ is a solution, then so is $H^* \Pi$ for any $k \times k$ permutation matrix $\Pi$, since $(H^*\Pi)(H^*\Pi)^\top = H^* \Pi \Pi^\top H^{*\top} = H^* H^{*\top}$.
In the unconstrained case $S = BB^\top$, $B$ is determined only up to right-multiplication by an orthogonal matrix $Q$; non-negativity $H \ge 0$ removes this continuous rotational ambiguity and leaves only discrete column permutation \cite{kuang2015symnmf,gillis2020nmf}.
Under mild separability conditions on $S$ (a ``sufficiently scattered'' factor structure), the non-negative rank-$k$ factorization is essentially unique up to that permutation \cite{gillis2020nmf}. Unlike asymmetric NMF, there is no diagonal-scaling freedom: $H \mapsto H\,\mathrm{diag}(\bm{c})$ alters $HH^\top$.

\subsection{Memory-Efficient Loss Reformulation}\label{snmf:loss}

A naive implementation forms the \(n \times n\) residual \(S - HH^\top\) explicitly; including intermediate buffers, this requires ${\sim}\,5$ dense $n \times n$ matrices ($S$, the product $HH^\top$, the residual, and backward-pass temporaries; $20\,n^2$ bytes in fp32), whereas the problem intrinsically stores only $S$ itself and the $n \times k$ factor ($n^2 + nk$ entries).

We reformulate $f$ by expanding the squared Frobenius norm via the trace identity:
\begin{equation}\label{eq:trace_loss}
\|S - HH^\top\|_F^2 = \|S\|_F^2 - 2\,\mathrm{tr}(H^\top S H) + \|H^\top H\|_F^2.
\end{equation}
All intermediates in~\eqref{eq:trace_loss} and~$\nabla_H f$ are at most \(n \times k\) or \(k \times k\). The gradient is
\[
\nabla_H f = 4\bigl(H(H^\top H) - SH\bigr),
\]
evaluated without $n \times n$ temporaries: $H(H^\top H)$ is formed right-to-left via a $k \times k$ then an $n \times k$ product.
Gradient-based methods use $\nabla_H f$ directly; multiplicative updates reuse $SH$ and $H(H^\top H)$ as a ratio. The alternating direction method of multipliers (ADMM) solves a $k \times k$ linear system and never forms $\nabla_H f$ (Appendix~\ref{algo_detail}).

\begin{table}[H]
\centering\small
\caption{Peak memory and single-GPU capacity of the naive versus trace-reformulated objective (fp32). Removing all $n \times n$ intermediates roughly doubles the largest feasible $n$.}
\label{tab:memory}
\smallskip
\begin{tabular}{lccc}
\toprule
\textbf{Implementation} & \textbf{Peak memory} & \textbf{Max $n$ (80\,GB)} & \textbf{Max $n$ (192\,GB)} \\
\midrule
Naive ($n{\times}n$ residual) & ${\sim}20\,n^2$ B & ${\sim}60{,}000$ & ${\sim}95{,}000$ \\
Reformulated (trace) & ${\sim}4\,n^2$ B & ${\sim}130{,}000$ & ${\sim}210{,}000$ \\
\bottomrule
\end{tabular}
\end{table}

This purely algebraic change approximately doubles the maximum~$n$ on a single GPU (Table~\ref{tab:memory}). The reformulation is applied uniformly across all solvers.

\paragraph{Numerical consideration.} When the normalized squared error $E_t = \|S - HH^\top\|_F^2 / \|S\|_F^2$ is small, the three terms in \eqref{eq:trace_loss} are each of size $\approx \|S\|_F^2$ and nearly cancel, amplifying rounding error. The relative error in the computed loss scales as ${\sim}\,2\varepsilon / E_t$, where $\varepsilon \approx 1.2 \times 10^{-7}$ is fp32 machine epsilon. At our target $E_t \sim 10^{-4}$ this is only ${\sim}\,0.2\%$, negligible for algorithm comparison (fp64 accumulation would matter only at tighter targets). The effect is worst in the near-exact regime (TPDM at $n = 10^6$, where $E_t \to 0$), and the loss itself becomes precision-limited. We therefore certify convergence with the projected-gradient gate of Section~\ref{snmf:convergence}, not the loss value. Half-precision formats (fp16, bf16, $\varepsilon \sim 10^{-3}$) would make the cancellation comparable to $E_t$ itself at $10^{-4}$, rendering the loss uninformative; whether fp32 accumulation of the three trace terms could rescue an fp16/bf16 $O(n^2 k)$ GEMM is left open. The reported experiments run in fp32; as a check we also ran the same solves with TF32 Tensor-Core multiplies (10-bit mantissa, fp32 accumulate) and obtained unchanged converged $E_t$ (Section~\ref{snmf:compute}).

\subsection{Algorithm Families}\label{snmf:algos}

We evaluate seven algorithm families under three paradigms for $\min_{H \ge 0} \|S - HH^\top\|_F^2$:
\begin{enumerate}[leftmargin=2em,topsep=3pt,itemsep=1pt]
\item[\textbf{(I)}] \textbf{Gradient projection.}  Compute $\nabla_H f$ and enforce $H \ge 0$ by clamp projection or softplus reparameterization (Section~\ref{snmf:logspace}). Families~2, 3, and~5 span first-order, adaptive, and second-order gradient information.
\item[\textbf{(II)}] \textbf{Mirror descent.}  Replace Euclidean geometry with a negative-entropy Bregman divergence so non-negativity is intrinsic to the mirror map. Family~1 comprises multiplicative updates (MU), exponentiated gradient, and trace-norm regularized MU.
\item[\textbf{(III)}] \textbf{Lagrangian relaxation / splitting.}  Decouple bilinearity by variable splitting or block-coordinate subproblems. Families~4 and~6 cover block coordinate descent (BCD), the alternating direction method of multipliers (ADMM), and a randomized surrogate.
\end{enumerate}
Family~7 deep-unfolds an MU-like iteration end-to-end. In total we obtain over 30 solver configurations and over 100 hyperparameter runs; update rules and costs are in Appendix~\ref{algo_detail}. Table~\ref{tab:families} summarizes the families, detailed next.\footnote{As of 2026, no mainstream GPU library provides GPU-accelerated SymNMF; we implement all solvers in PyTorch, calling cuBLAS via \texttt{torch.linalg}, and the hard-clustering baseline (Section~\ref{clustering}) in the same framework.}

The objective $f$ is quartic in the entries of $H$ and globally non-convex. The two-factor surrogate $\|S - HW^\top\|_F^2$ is biconvex in $(H,W)$: with either factor fixed, the other subproblem is convex least squares, and SymNMF is recovered by the coupling $W = H$. Block-coordinate methods (Family~4) and ADMM (Family~6) exploit this lifting, ADMM enforcing $W = H$ by an augmented Lagrangian at the cost of slow, coupling-limited convergence \cite{boyd2011distributed}. Adaptive methods (Family~3) stay with the single-factor objective. All first-order methods are built from the products $SH$ and $H(H^\top H)$ of Section~\ref{snmf:loss}, at cost $O(n^2 k + nk^2)$ per iteration with no $n \times n$ intermediate.

\begin{table}[t]
\centering\footnotesize
\setlength{\tabcolsep}{4pt}\renewcommand{\arraystretch}{1.12}
\caption{Solver families. Update rules and costs: Appendix~\ref{algo_detail}.}
\label{tab:families}
\begin{tabularx}{0.92\textwidth}{@{}c >{\raggedright\arraybackslash}p{2.8cm} >{\raggedright\arraybackslash}p{4.8cm} >{\raggedright\arraybackslash}X c@{}}
\toprule
\textbf{\#} & \textbf{Family} & \textbf{Methods} & \textbf{Role} & \textbf{\S} \\
\midrule
1 & Multiplicative / mirror descent & MU, exp.\ gradient, trace-norm MU & Intrinsic $H\ge 0$; slow first-order & \ref{fam:1} \\
2 & Projected first-order & PGD+momentum, APG/FISTA & Euclidean projection onto $H\ge 0$ & \ref{fam:2} \\
3 & Adaptive first-order & AdaGrad, RMSprop, Adam/NAdam/Adan, extensions & Per-element adaptive rates & \ref{fam:3} \\
4 & Block coordinate & HALS, ANLS & Column / two-block least squares & \ref{fam:4} \\
5 & Second-order & Diag.\ Newton, L-BFGS, PGNCG & Curvature-based local steps & \ref{fam:5} \\
6 & Splitting & ADMM, rand.\ LAI-SymNMF & Split bilinear constraint & \ref{fam:6} \\
7 & Deep unfolding & SymNMF-Net & Unroll MU; learn block parameters & \ref{fam:7} \\
\bottomrule
\end{tabularx}
\end{table}

\inlinesubsubsections
\subsubsection{Family 1: Multiplicative / Mirror Descent}\label{fam:1}
The dampened MU rule of \cite{he2011symmetric} updates $H$ via element-wise scaling:
\begin{equation}\label{eq:mu_update}
    H \leftarrow \tfrac{1}{2}\,H \odot \left(\mathbf{1} + SH \oslash \bigl(H(H^\top H) + \epsilon\bigr)\right),
\end{equation}
guaranteeing monotonic decrease and preserving non-negativity by construction.  We additionally test a \textbf{trace-norm regularized} variant that adds a penalty $\lambda(\operatorname{tr}(S) - \|H\|_F^2)^2$, approximating the scale-stabilizing effect of the unnormalized KL divergence at $O(nk)$ cost (no $n \times n$ intermediates).  \textbf{Mirror descent} with the negative-entropy mirror map yields the exponentiated gradient update $H \leftarrow H \odot \exp(-\eta\,\nabla_H f)$, which preserves strict positivity without projection \cite{nemirovsky1983problem}.  All three are slow first-order methods and sensitive to initialization.

\subsubsection{Family 2: First-Order Gradient Methods}\label{fam:2}
Projected Gradient Descent (PGD) with Armijo backtracking \cite{armijo1966minimization} and heavy-ball momentum \cite{polyak1964some}, and Accelerated Proximal Gradient (APG/FISTA) with adaptive restart \cite{beck2009fast,odonoghue2015adaptive}.

\subsubsection{Family 3: Adaptive First-Order Methods}\label{fam:3}
Per-element adaptive learning rates. Throughout this paper, \textbf{AdaGrad} means diagonal AdaGrad \cite{duchi2011adagrad}: projected SGD with accumulator $G \leftarrow G + g \odot g$ and step $H \leftarrow [H - \eta\, g / \sqrt{G}]_+$ (equivalently $G_t = \sum_{s\le t} g_s^2$ entrywise). The original full-matrix preconditioner is $(nk)\times(nk)$ and intractable here; we never use it. The same diagonal projected template underlies \textbf{RMSprop} \cite{hinton2012rmsprop} (EMA of squared gradients), \textbf{Adam} \cite{kingma2014adam} (first- and second-moment EMAs), \textbf{NAdam} \cite{dozat2016incorporating} (Nesterov-accelerated Adam), and \textbf{Adan} \cite{xie2023adan} (three-term Nesterov momentum with adaptive rates). We also test AdaGrad variants aimed at accumulator staleness and cheaper steps (Section~\ref{snmf:perf}): \textbf{Piecewise AdaGrad} (stagnation-triggered accumulator resets); \textbf{Row-Stochastic AdaGrad} (row subsampling, $O(|I|\,n\,k)$ per step); \textbf{Row-Stochastic SVRG} (periodic full-gradient snapshots \cite{johnson2013svrg}); \textbf{Adaptive Growing-Batch AdaGrad} (\textsc{AdaptGrow}; entry sampling at $O(mk)$ with adaptive batch growth \cite{friedlander2012hybrid, byrd2012sample}); and \textbf{Block-SVRG \textsc{AdaptGrow}} (dense $|I|\times|J|$ blocks for cuBLAS throughput, with an SVRG hybrid snapshot against per-row accumulator heterogeneity). These are empirical engineering methods: we report wall-clock and KKT reliability under Section~\ref{methodology}, not regret or rate guarantees. Kronecker-factored preconditioners were tested but excluded (Appendix~\ref{algo_detail}).

\subsubsection{Family 4: Block Coordinate Descent}\label{fam:4}
\textbf{Hierarchical alternating least squares (HALS)} \cite{kimpark2008} updates columns of $H$ individually via closed-form least-squares subproblems.  \textbf{Alternating nonnegative least squares (ANLS)} \cite{kim2014algorithms} uses a coupled two-factor formulation, solving $k \times k$ Cholesky systems per iteration.

\subsubsection{Family 5: Second-Order Methods}\label{fam:5}
A \textbf{diagonal-Newton} baseline with Levenberg--Marquardt damping \cite{levenberg1944method,marquardt1963algorithm} and Armijo backtracking \cite{armijo1966minimization} on the Hessian diagonal; projected \textbf{L-BFGS} \cite{nocedal2006numerical}; and \textbf{projected Gauss--Newton with truncated conjugate gradient (PGNCG)} \cite{eswar2020pgncg}, the local solver underlying the distributed PLANC library \cite{eswar2021planc}.

\subsubsection{Family 6: Splitting Methods}\label{fam:6}
\textbf{ADMM} \cite{boyd2011distributed} decouples the problem via variable splitting ($W = H$, penalty $\rho$), alternating between a least-squares $H$-update, a clamped $W$-update, and dual ascent.  \textbf{Randomized LAI-SymNMF} \cite{hayashi2024randomized} solves a low-rank eigendecomposition surrogate via cheap HALS-style updates.

\subsubsection{Family 7: Deep Unfolding}\label{fam:7}
\textbf{SymNMF-Net} \cite{li2022symncf} unfolds $T$ iterations of a multiplicative-update rule into a feedforward network whose per-block parameters (step sizes, damping) are learned end-to-end via backpropagation through the unrolled computation graph. The method requires per-instance training: the network weights are optimized for each input matrix $S$ individually, since a single set of weights does not generalize across matrices of different spectral structure.
\displaysubsubsections

\subsection{Constraint Handling Strategies}\label{snmf:logspace}

Every factorization must satisfy \(H \ge 0\). Family~1 enforces this intrinsically; Families~4 and~6 build it into their subproblems; Family~7 uses a ReLU. Gradient-projection families (2, 3, and~5) are compared under clamp projection and softplus.

\paragraph{Projected space.} The optimizer updates \(H\) directly and clamps after each step, including adaptive and second-order updates:
\[
H^{(t+1)} = \max\!\bigl(H^{(t)} - \eta\,\nabla_H f,\; 0\bigr).
\]
There is no reparameterization overhead, but once an entry reaches \(H_{ij}=0\) it may remain there: under non-convexity the landscape can admit boundary local minima that are stable to single-coordinate moves yet escapable by a coordinated interior step. Softplus avoids this trap.

\paragraph{Softplus reparameterization (LogSpace).} We introduce an unconstrained parameter \(\Theta \in \mathbb{R}^{n \times k}\) and set \(H = \mathrm{softplus}(\Theta) = \log(1 + e^\Theta)\) entrywise \cite{subramani2024rethinking}, distinct from the ADMM factor \(W\). Softplus is strictly positive, so \(H > 0\) without projection. The optimizer runs on \(\Theta\); gradients chain-rule as
\begin{equation}\label{eq:logspace_grad}
\nabla_\Theta f = \nabla_H f \;\odot\; \sigma(\Theta), \qquad \sigma(\Theta) = \frac{1}{1 + e^{-\Theta}}.
\end{equation}
Because \(\sigma(\Theta_t)\) depends on the iterate, the \(\Theta\)-space gradient is non-stationary even when the \(H\)-space landscape is not.

\paragraph{Interaction with optimizer memory.} Whether adaptive second-moment statistics track \(\sigma(\Theta_t)\) depends on memory length, which predicts (Section~\ref{results:logspace}):
\begin{itemize}[leftmargin=2em,topsep=2pt,itemsep=1pt]
\item \textbf{Short-memory} (RMSprop, \(\alpha = 0.99\), effective window \({\sim}100\) iterations): the EMA \(v_t = \alpha\, v_{t-1} + (1-\alpha)\, g_t^2\) forgets stale magnitudes fast enough to track \(\sigma(\Theta_t)\).
\item \textbf{Long-memory} (Adam, \(\beta_2 = 0.999\), window \({\sim}1{,}000\); AdaGrad, unbounded cumulative sum): statistics mix regimes where \(\sigma(\Theta)\) differed, so adaptive rates become mis-scaled.
\item \textbf{No second-moment accumulator} (diagonal Newton, L-BFGS): local curvature is not corrupted by \(\sigma\)-drift; the smoother, projection-free surface can even help at small scale.
\end{itemize}
Softplus is therefore expected to help only at small scale, and to hurt long-memory adaptive methods as runs lengthen.

\section{Dependence Matrices}\label{maths}

Section~\ref{snmf} defined the SymNMF machinery that factorizes a symmetric, entrywise non-negative matrix $S$; this section specifies the two concrete dependence matrices that serve as that input, together with the low-rank structure they share: Section~\ref{corr_intro} recalls the (absolute) correlation matrix, capturing dependence across the full distribution; Sections~\ref{regvar}--\ref{tpdm} build the tail pairwise dependence matrix (TPDM) from multivariate regular variation, capturing dependence conditional on extreme events; Section~\ref{estimation} gives its empirical estimator; and Section~\ref{decompositions} states the low-rank decomposition both types admit.

\subsection{Correlation Matrices}\label{corr_intro}

Under a linear factor model $\bm{X} = A\bm{Z} + \bm{\varepsilon}$ with $\bm{Z} \sim \mathcal{N}(0, I_{k+1})$ and $\bm{\varepsilon} \sim \mathcal{N}(0, \sigma_\varepsilon^2 I_n)$, one has $\operatorname{Cov}(\bm{X}) = AA^\top + \sigma_\varepsilon^2 I$ and population correlation
\[
C_{ij} = \frac{(AA^\top)_{ij}}{\sqrt{(AA^\top)_{ii} + \sigma_\varepsilon^2}\;\sqrt{(AA^\top)_{jj} + \sigma_\varepsilon^2}},
\]
with unit diagonal. The matrix $C$ is symmetric positive semidefinite with entries in $[-1,1]$. Our benchmarks use the absolute matrix $|C|$, which lies in $[0,1]$ but need not remain PSD at large~$n$ (Section~\ref{snmf:spectral}); it summarizes pairwise linear dependence over the full distribution of~$\bm{X}$.

In applications where dependence during normal conditions differs from dependence during extreme events \cite{Cooley2019}, the correlation matrix may fail to capture the co-movement structure that drives systemic risk. This motivates the construction of a matrix that isolates tail dependence, which we develop next using the theory of multivariate regular variation.

\subsection{Regular Variation and Max-Stable Distributions}\label{regvar}

The TPDM of Section~\ref{tpdm} is defined for multivariate regularly varying vectors, which sit in the heavy-tailed (Fr\'echet, $\xi > 0$) domain of attraction; the Weibull ($\xi < 0$, finite upper endpoint) and Gumbel ($\xi = 0$) domains fall outside that regularly varying setup, so the construction does not apply there (\cite{Coles2001}, Theorem~3.1; \cite{Embrechts2013ExtremeValueTheoryFinanceInsurance}). A survival function with tail index $\alpha>0$ satisfies $\overline{F}(tx)/\overline{F}(t)\to x^{-\alpha}$ as $t\to\infty$ (i.e.\ $\overline{F}\in RV_{-\alpha}$). Larger $\alpha$ is a lighter power-law tail. More generally:

\begin{definition} (\cite{resnick2010heavy}) A measurable function $U:\mathbb{R}_{+} \rightarrow \mathbb{R}_{+}$ is regularly varying at $\infty$ with index $\alpha \in \mathbb{R}$ (written $U \in RV_{\alpha}$) if for all $x > 0$, $$\lim_{t \rightarrow \infty} \frac{U(tx)}{U(t)} = x^{\alpha} \text{.}$$
\end{definition}

The multivariate extension proceeds as follows. For $n$-dimensional i.i.d.\ random vectors, the componentwise maxima $\bm{M}_N = (\max_{i \le N} X_{i,1}, \dots, \max_{i \le N} X_{i,n})$ may converge after renormalization to a multivariate extreme value (MEV) distribution~$G$. Using a standard transformation (\cite{de2010extreme}, \cite{resnick19872014extreme}), the marginals can be standardized to unit Fr\'echet, separating the dependence problem from marginal behavior. Under this standardization,
\begin{equation}\label{max_regvar}
G\lb \bm{x}\rb = \exp(-V(\bm{x})) \text{ with } V(\bm{x}) = \int_{\mathbb{S}_{+}^{n-1}} \max_{i=1,\dots,n} \frac{w_i}{x_i} d\mathcal{H}(\bm{w}),
\end{equation}
where $\mathcal{H} = \mathcal{H}_{\bm{X}}$ is the spectral (angular) measure on the positive orthant of the unit sphere $\mathbb{S}_{+}^{n-1}$, normalized so that $\int_{\mathbb{S}_{+}^{n-1}} w_i \, d\mathcal{H}(\bm{w}) = 1$ for each $i$.\footnote{Classical EVT notation writes $H$ for the angular measure and $\Theta_{n-1}$ for this sphere~\cite{resnick2010heavy,Cooley2019}; we write $\mathcal{H}$ and $\mathbb{S}_{+}^{n-1}$ to reserve $H$ for the SymNMF factor and $\Theta$ for the softplus reparameterization of Section~\ref{snmf:logspace}.} 

\begin{definition} (\cite{resnick2010heavy}) A random vector $\bm{X} \in \mathbb{R}_{+}^{n}$ is regularly varying if there exists $\{ b_N \}_{N \geq 1} \subset \mathbb{R}$ and a limit measure $\nu_{\bm{X}}$ on $[0,\infty]^n \backslash \{0\}$ such that 
$$N \mathbb{P} \left( \frac{ \bm{X}}{b_N} \in \cdot \right) \xrightarrow{v} \nu_{\bm{X} } (\cdot) \text{, as } N \rightarrow \infty \text{,}$$
where $\xrightarrow{v}$ denotes vague convergence. It can be shown that $b_N = L(N) N^{1/\alpha}$ for a slowly varying $L$ and tail index $\alpha > 0$. Denote $RV_{+}^n(\alpha)$ the set of $n$-dimensional regularly varying vectors with tail index $\alpha$. 
\end{definition}

\paragraph{Polar decomposition.}\label{polar} The limiting measure $\nu_{\bm{X}}$ has the scaling property $\nu_{\bm{X}} (aC) = a^{-\alpha} \nu_{\bm{X}}(C)$ (\cite{resnick2010heavy}). In polar coordinates $(R, \bm{W}) = (\|\bm{X}\|_2, \bm{X}/\|\bm{X}\|_2)$ this factorizes as
$$\nu_{\bm{X}} (dr \times d\bm{w}) = \alpha r^{-\alpha -1} dr\, d\mathcal{H}_{\bm{X}}(\bm{w}) \text{,}$$
separating the radial component (determined by~$\alpha$) from the angular component (the spectral measure~$\mathcal{H}$). Since $\alpha$ is fixed by the marginal standardization, only $\mathcal{H}$ remains to be estimated.

\subsection{Tail Pairwise Dependence Matrix}\label{tpdm}

We adopt the TPDM construction of \cite{Cooley2019}, which constructs a symmetric pairwise matrix describing the tail dependence of multivariate regularly varying random vectors. Using tail dependence rather than the full distribution is motivated by regulatory frameworks \cite{BCBS2011} and active portfolio construction \cite{DeLuca2011Tail,Lohre2020Hierarchical}. We consider one-sided (lower-tail) extremes, motivated by the asymmetry of financial risk management applications. The regular-variation lemmas for the linear factor model used below are stated and proved in Appendix~\ref{proofs}.

Under a linear factor model, regular variation propagates by two elementary rules (Propositions~\ref{prop:sum}--\ref{prop:prod}): independent regularly varying summands with a common normalizing sequence have limit measure equal to the sum of their limit measures, and scaling a regularly varying vector by $a>0$ multiplies its limit measure by $a^\alpha$. Applied to $A\bm{Z} = \sum_{j=1}^{k} \bm{a}_j Z_j$ with i.i.d.\ regularly varying factors, the angular measure collapses to point masses at the normalized loading directions (Corollaries~\ref{cor:scalar}--\ref{cor}):
\[
\mathcal{H}_{A\bm{Z}}(\cdot) = \sum_{j=1}^{k} \lVert \bm{a}_j \rVert^{\alpha}\, \delta_{\bm{a}_j / \lVert \bm{a}_j \rVert}(\cdot).
\]
Its total mass is $\sum_j \lVert \bm{a}_j \rVert^{\alpha}$; after unit-Fr\'echet marginal standardization this matches the moment constraint of~\eqref{max_regvar}, so the TPDM below is well-defined.
    
    \begin{definition} (\cite{Cooley2019}) The TPDM corresponding to \(\bm{X} \in RV_{+}^n(\alpha)\) is defined as the \(n \times n\) matrix
    
    \[
    \Sigma_{\bm{X}} = \bigl( (\Sigma_{\bm{X}})_{iq} \bigr)_{i,q = 1, \dots, n} \text{, where } (\Sigma_{\bm{X}})_{iq} = \int_{\mathbb{S}_{+}^{n-1}} w_i w_q \, d\mathcal{H}_{\bm{X}}(\bm{w}) \text{.}
    \]
    
    \end{definition}
    
    By construction, $\Sigma_{\bm{X}}$ is symmetric ($(\Sigma_{\bm{X}})_{iq} = (\Sigma_{\bm{X}})_{qi}$) and entry-wise non-negative, since $w_i \ge 0$ on $\mathbb{S}_{+}^{n-1}$ and $\mathcal{H}_{\bm{X}}$ is a positive measure.  It is therefore a valid input for symmetric non-negative matrix factorization.
    The extremal dependence between components \(X_i\) and \(X_q\) is summarized by \((\Sigma_{\bm{X}})_{iq}\). Asymptotic independence of \(X_i\) and \(X_q\) is equivalent to \(\mathcal{H}_{\bm{X}} \left( \{ \bm{w} \in \mathbb{S}_{+}^{n-1} : w_i >0, w_q > 0 \} \right) = 0\), which corresponds to \((\Sigma_{\bm{X}})_{iq} = 0\) (\cite{Sibuya1960}, \cite{Schlather2003}).

    Under the conditions of Corollary~\ref{cor}, the TPDM of $A\bm{Z}$ has entries
    
    \[
    (\Sigma_{A\bm{Z}})_{iq} = \int_{\mathbb{S}_{+}^{n-1}} w_i w_q \, d\mathcal{H}_{A\bm{Z}}(\bm{w}) = \sum_{j=1}^{k} \left\Vert \bm{a}_j \right\Vert_2^{\alpha} \left( \frac{a_{ij}}{\left\Vert \bm{a}_j \right\Vert_2} \right) \left( \frac{a_{qj}}{\left\Vert \bm{a}_j \right\Vert_2} \right) \text{.}
    \]
    
    In financial applications, regularly varying distributions are commonly modeled with Pareto($\alpha$) marginals ($\mathbb{P}(X>x) = x^{-\alpha}$ for $x \ge 1$), where $\alpha = 2$ is a standard choice motivated by empirical evidence from stock prices and insurance claims (\cite{Hult2007,Cheng2024}) and by regulatory frameworks (\cite{BCBS2011}). Under $\alpha = 2$ the TPDM simplifies to a Gram matrix:
    \[
    (\Sigma_{A\bm{Z}})_{iq} = \sum_{j=1}^{k} a_{ij} a_{qj} \text{, i.e.\ } \Sigma_{A\bm{Z}} = A A^\top \text{.}
    \]
The SymNMF population target we use throughout is the \emph{unit-diagonal} normalization of this structural Gram matrix, $\Sigma = D^{-1/2}AA^\top D^{-1/2}$ with $D_{ii} = (AA^\top)_{ii}$ (Appendix~\ref{app:matrices}): $H$ therefore recovers the loadings of~$A$ only up to that per-row scaling. The empirical estimator (Section~\ref{estimation}) applies the same unit-diagonal normalization to the exceedance Gram $\tilde\Sigma_{\bm{X}}=(n/n_{\mathrm{exc}})\Omega^\top\Omega$, which estimates~$\Sigma$. Under the noiseless factor model of Corollary~\ref{cor} the population equality is exact; with same-index regularly varying noise as in our simulator, it is the leading factor contribution. We restrict to $\alpha = 2$ for the remainder of the paper.  For general $\alpha$, the (unnormalized) TPDM entry becomes $(\Sigma_{A\bm{Z}})_{iq} = \sum_j \|\bm{a}_j\|^{\alpha-2} a_{ij} a_{qj}$, a column-weighted Gram matrix that is still symmetric and non-negative but no longer equals $AA^\top$. SymNMF remains applicable, but the recovered factor $H$ no longer directly estimates the loading matrix $A$; interpreting $H$ then requires knowledge of $\alpha$. The algorithmic findings of this paper are properties of the SymNMF optimization problem itself and transfer to any $\alpha$; only the statistical interpretation of the factor changes.

With unit diagonal, $\sigma_{ii} = 1$ for every~$i$ and the trace equals~$n$:
    \[
    \operatorname{tr}(\Sigma_{\bm{X}}) = \sum_{i=1}^{n} \sigma_{ii} = n \text{.}
    \]
Since the correlation matrix also has unit diagonal and trace~$n$, both input types therefore present SymNMF with matrices of comparable Frobenius norm ($\|S\|_F^2 \ge \operatorname{tr}(S) = n$), so the normalized squared reconstruction error $E_t = \|S - HH^\top\|_F^2 / \|S\|_F^2$ is on a common scale across types.

\subsection{Empirical Estimation}\label{estimation}

Let \(\bm{x}_t\), \(t=1,\dots,n_{\mathrm{sample}}\), be i.i.d.\ observations from a regularly varying random vector with Pareto(2) marginals. In practice, marginals are transformed to this form.
    
Denote \(r_t = \left\Vert \bm{x}_t \right\Vert_2\) and \(\bm{w}_t = \bm{x}_t / \Vert \bm{x}_t \Vert_2\), for all \(t = 1,\dots,n_{\mathrm{sample}}\), using the polar coordinate transform of Section~\ref{regvar}. By construction, the TPDM depends only on the angular measure on the positive orthant of the unit sphere, \(\mathbb{S}_{+}^{n-1} = \{ \bm{w} \in \mathbb{R}_{+}^n : \left\Vert \bm{w} \right\Vert_2 = 1 \}\).

In practice, the theoretical angular measure is not known. Denote its empirical estimate as \(\hat{\mathcal{H}}_{\bm{X}}(\cdot)\). Consider the probability measure
    
    \[
    \hat{N}_{\bm{X}}(\cdot) = \frac{1}{n_{\mathrm{exc}}} \sum_{t=1}^{n_{\mathrm{sample}}} \delta_{\bm{w}_t}(\cdot)\, \mathbf{1}\{r_t > r_0\} \text{,}
    \] where \(r_0\) is a high threshold for radial coordinates and
    
    \[
    n_{\mathrm{exc}} = \sum_{t=1}^{n_{\mathrm{sample}}} \mathbf{1}\{r_t > r_0\}. 
    \]
    
Then, \(\hat{\mathcal{H}}_{\bm{X}}(\cdot) = n \hat{N}_{\bm{X}}(\cdot)\). By construction, the mass appears precisely at the angular components of the observations corresponding to the exceedances above the radial threshold \(r_0\). The unnormalized empirical Gram matrix is
    
    \[
    \tilde{\Sigma}_{\bm{X}} = \frac{n}{n_{\mathrm{exc}}} \Omega^\top \Omega,
    \]
    
where the rows of the $n_{\mathrm{exc}} \times n$ matrix $\Omega$ are the angular vectors $\bm{w}_t$ for all $t$ such that $r_t > r_0$.  The SymNMF input is the unit-diagonal normalization $\hat{\Sigma}_{\bm{X}} = D^{-1/2}\tilde{\Sigma}_{\bm{X}} D^{-1/2}$ with $D_{ii} = (\tilde{\Sigma}_{\bm{X}})_{ii}$, matching the population target of Section~\ref{tpdm} and Appendix~\ref{app:matrices}.  Since $\hat\Sigma_{\bm{X}}$ is a (normalized) Gram matrix of $n_{\mathrm{exc}}$ vectors, $\operatorname{rank}(\hat\Sigma_{\bm{X}}) \le n_{\mathrm{exc}}$; when $n_{\mathrm{exc}} < n$ the empirical TPDM is rank-deficient, which affects the SymNMF landscape (Section~\ref{snmf:spectral}).  The choice of threshold $r_0$ (equivalently, the exceedance fraction~$q$) controls a bias--variance trade-off: too low includes non-tail observations; too high leaves $n_{\mathrm{exc}}$ small and the estimate noisy.  In our benchmarks we use the top $1\%$ of radial norms ($q = 0.01$); details are in Appendix~\ref{app:matrices}.

\subsection{Low-Rank Decomposition}\label{decompositions}

For \(\bm{X} \in RV_{+}^n(2)\), \cite{Cooley2019} prove the existence of \(k^* < \infty\) such that $\Sigma_{\bm{X}} = A_{k^*} A_{k^*}^\top$ with $A_{k^*} \in \mathbb{R}_+^{n \times k^*}$, and propose two exact decompositions. We instead pursue an \emph{approximate} low-rank decomposition $\Sigma_{\bm{X}} \approx AA^\top$ with $A \in \mathbb{R}_+^{n \times k}$ and $k \ll n$. The same framework applies to both matrix types: for the TPDM, non-negativity of $A$ is structurally guaranteed; for correlation matrices, it holds when using absolute correlations.

\section{Experimental Methodology}\label{methodology}

This section develops the experimental infrastructure: a unified convergence criterion (Section~\ref{snmf:convergence}), the spectral properties that govern optimization difficulty (Section~\ref{snmf:spectral}), the GPU computational pipeline (Section~\ref{snmf:compute}), and the benchmark matrix construction (Section~\ref{snmf:matrices}).

\subsection{Convergence Criterion}\label{snmf:convergence}
A fair comparison of optimization algorithms requires a single, theoretically grounded stopping rule applied uniformly across all solvers.
In the NMF literature, three classes of stopping criteria are common:
(i)~relative decrease in the objective, $|E_{t} - E_{t-1}|/E_{t-1} < \tau_1$;
(ii)~relative change in the iterates, $\|H^{(t)} - H^{(t-1)}\|_F / \|H^{(t-1)}\|_F < \tau_2$;
(iii)~first-order optimality residuals (with their own tolerance $\tau_3$).
The first two are widely used \cite{lee2001algorithms,he2011symmetric} but have well-known shortcomings: (i)~can trigger far from stationarity when iterates traverse a nearly flat valley, and the denominator $E_{t-1}$ becomes numerically unstable in single precision as $E_t \to 0$; (ii)~can trigger from small step sizes alone even with a large gradient, and provides no quality guarantee.
Neither alone ensures proximity to a first-order stationary point.

For $\min_{H \ge 0} f(H)$, the first-order necessary conditions are the \emph{Karush--Kuhn--Tucker} (KKT) conditions \cite{bertsekas1999nonlinear}.
At a stationary point $H^*$ they require
\begin{equation}\label{eq:kkt}
[\nabla_H f(H^*)]_{ij} \;\ge\; 0 \;\;\text{for all } i,j, \qquad
H^*_{ij}\,[\nabla_H f(H^*)]_{ij} = 0 \;\;\text{for all } i,j, \qquad
H^*_{ij} \;\ge\; 0 \;\;\text{for all } i,j.
\end{equation}
Following \cite{lin2007projected}, these are summarized by the \emph{projected gradient}:
\begin{equation}\label{eq:proj_grad}
[\nabla_{\mathrm{proj}} f(H)]_{ij} \;=\; \begin{cases}
[\nabla_H f(H)]_{ij} & \text{if } H_{ij} > 0 \quad \text{(interior)}, \\
\min\!\bigl(0,\; [\nabla_H f(H)]_{ij}\bigr) & \text{if } H_{ij} = 0 \quad \text{(boundary)}.
\end{cases}
\end{equation}
On the boundary this keeps only components that point into the feasible region (negative entries of $\nabla_H f$); non-negative boundary entries already satisfy stationarity and contribute zero. The norm $\|\nabla_{\mathrm{proj}} f(H)\|_F$ vanishes if and only if~\eqref{eq:kkt} holds.

Our stopping rule requires three criteria simultaneously.

\medskip\noindent\textbf{Criterion 1: Loss gate.}
\begin{equation}\label{eq:loss_gate}
E_t \;<\; \tau_E, \qquad \tau_E = 0.1.
\end{equation}
The normalized squared reconstruction error $E_t = \|S - H^{(t)}{H^{(t)}}^\top\|_F^2 / \|S\|_F^2$ must be below~$\tau_E$, i.e.\ the residual carries at most $10\%$ of the squared Frobenius energy of~$S$ (a relative Frobenius residual of ${\sim}32\%$).
This is a deliberately loose anti-degeneracy guard against KKT points with meaningless reconstruction (notably $H = 0$, which satisfies $\nabla_H f = 0$ yet $E_t = 1$); the projected-gradient gate below is the binding stationarity criterion.

\medskip\noindent\textbf{Criterion 2: Per-element KKT.}
\begin{equation}\label{eq:kkt_stop}
\frac{\|\nabla_{\mathrm{proj}} f(H^{(t)})\|_F}{n \cdot k} \;<\; \tau_g, \qquad \tau_g = 10^{-4}.
\end{equation}
Dividing by $nk$, the number of entries in $H$, gives an entry-normalized stationarity measure, so a single threshold $\tau_g$ stays meaningful across problem sizes rather than tracking the growing raw norm; this follows the element-normalized convention of Kim \& Park~\cite{kim2014algorithms}.
The alternative normalization by the initial projected gradient~\cite{lin2007projected,gillis2020nmf} itself scales with $n$ and $k$, and in our experiments caused false convergence at $n = 10^4$ after ${\sim}10$ iterations.

When $k$ is below the number of significant eigenvalues of~$S$, the rank-$k$ factorization incurs irreducible approximation error whose gradient cannot be eliminated. At $n = 10^2$ ($k = 10$, true model rank $11$), no solver achieves $\|\nabla_{\mathrm{proj}}\|_F / (nk) < 10^{-4}$ even after $10^5$ iterations; increasing to $k = 25$ in the fixed-rank stress test (Section~\ref{snmf:k25}) restores convergence. To keep the benchmark informative at small scales, we use a scaled threshold:
\begin{equation}\label{eq:kkt_scaled}
\tau_g(n) \;=\; \tau_g^{\infty} \cdot \max\!\Bigl(1,\; \frac{n_{\mathrm{ref}}}{n}\Bigr), \qquad \tau_g^{\infty} = 10^{-4},\; n_{\mathrm{ref}} = 10^4,
\end{equation}
relaxing to $10^{-3}$ at $n = 10^3$ and $10^{-2}$ at $n = 10^2$. For $n \ge 10^4$ we use $\tau_g = 10^{-4}$, except TPDM at $n = 10^6$: there $k = n_{\mathrm{exc}} = 400$ (Section~\ref{snmf:spectral}) makes $E_t \to 0$ and the projected gradient precision-limited near $10^{-3}$, so we certify at $\tau_g = 10^{-3}$ (Section~\ref{results:phase2}). This is a numerical floor, not a weaker standard.

\medskip\noindent\textbf{Criterion 3: Objective stagnation.}
\begin{equation}\label{eq:obj_stag}
\frac{E_{t-\delta} - E_t}{E_0} \;<\; \tau, \qquad \tau = 10^{-5},
\end{equation}
where $E_0 = \|S - H^{(0)}{H^{(0)}}^\top\|_F^2 / \|S\|_F^2$ is the error at initialization and $\delta$ is the check interval (default 10~iterations).
Normalizing by $E_0$ rather than $E_{t-\delta}$ stays stable when $E_t$ is near zero in single precision and avoids the pathological denominator of criterion~(i) above.
To avoid false positives in warm-up, this criterion is suppressed for the first 50~iterations.

\begin{equation}\label{eq:conv_and}
\text{converged} \;\Longleftrightarrow\; \underbrace{E_t < \tau_E}_{\text{loss gate}} \;\wedge\; \underbrace{\frac{\|\nabla_{\mathrm{proj}}\|_F}{nk} < \tau_g}_{\text{KKT}} \;\wedge\; \underbrace{\frac{E_{t-\delta} - E_t}{E_0} < \tau}_{\text{stagnation}}.
\end{equation}
Each alone is insufficient; together they require meaningful reconstruction, approximate stationarity, and exhausted progress.

\paragraph{LogSpace adaptation.}
For solvers using $H = \operatorname{softplus}(\Theta) > 0$ (Section~\ref{snmf:logspace}), $H_{ij} > 0$ everywhere, so the projected gradient~\eqref{eq:proj_grad} equals $\nabla_H f$ and no boundary masking is needed.
We always evaluate the gradient in $H$-space (not $\Theta$-space): the factor $\sigma(\Theta)$ in~\eqref{eq:logspace_grad} would otherwise distort the norm and spoil cross-solver comparison.

At each check interval, every solver records iteration count, wall-clock time, $E_t$, $\|\nabla_{\mathrm{proj}}\|_F / (nk)$, and $(E_{t-\delta} - E_t)/E_0$.

\subsection{Spectral Properties of Input Matrices}\label{snmf:spectral}

With the convergence criterion in place, we now characterize the input matrices themselves: the spectral structure of~$S$ governs the optimization landscape and largely determines which algorithms succeed. Table~\ref{tab:spectral} reports key spectral properties of the two matrix types (TPDM and sample correlation), averaged over three seeds per configuration.
\begin{table}[t]
\centering
\caption{Spectral properties of TPDM and correlation benchmark matrices at $k = \lfloor\sqrt{n}\rfloor$ (mean over seeds 7, 42, 99). $\kappa^+ = \lambda_{\max}/\lambda_{\min}^+$ is the condition number over strictly positive eigenvalues (the signal-subspace conditioning for TPDM, the full-spectrum $\lambda_{\max}/\lambda_{\min}$ for correlation); $r_{\mathrm{eff}} = \operatorname{tr}(S)/\lambda_{\max}$ the effective rank; $\gamma_k = \lambda_k/\lambda_{k+1}$ and $\gamma_{k+1} = \lambda_{k+1}/\lambda_{k+2}$ the spectral-gap ratios at the factorization rank and one position below (the true model rank $k{+}1$); $\%\!\operatorname{var}_k$ the trace fraction in the top~$k$ eigenvalues. $\dagger$: full-spectrum $\kappa^+$ not meaningful (correlation is mildly indefinite at $n \ge 10^5$; see text). $\ddagger$: $k = 1000$ exceeds the fixed rank $n_{\mathrm{exc}} = 400$, so $\lambda_k = \lambda_{k+1} = 0$ and $\gamma_k, \gamma_{k+1}$ are undefined (see text).}
\label{tab:spectral}
\smallskip
\begin{tabular}{ll rrr rr rrr}
\toprule
Type & $n$ & $\lambda_1$ & $\lambda_k$ & $\lambda_{k+1}$ & $\kappa^+$ & $r_{\mathrm{eff}}$ & $\gamma_k$ & $\gamma_{k+1}$ & $\%\!\operatorname{var}_k$ \\
\midrule
\multirow{5}{*}{TPDM}
 & 100    & 54.0    & 2.82  & 0.57   & $\sim\!6 \times 10^{5}$  & 1.9  & 5.0 & 1.8 & 96.5 \\
 & 1\,000  & 528    & 5.18  & 2.91   & $\sim\!5 \times 10^{7}$ & 1.9  & 1.8 & 3.1 & 98.7 \\
 & 10\,000 & 5\,833  & 1.07  & 0.98   & $\sim\!2 \times 10^{7}$ & 1.7  & 1.1 & 1.0 & 99.8 \\
 & 100\,000 & 74\,935 & 0.33 & 0.33 & $\sim\!10^{7}$ & 1.3 & 1.0 & 1.0 & 99.98 \\
 & 1\,000\,000 & $8.7 \times 10^{5}$ & $0^{\ddagger}$ & $0^{\ddagger}$ & $\sim\!3 \times 10^{6}$ & 1.1 & n/a$^{\ddagger}$ & n/a$^{\ddagger}$ & 100.0 \\
\midrule
\multirow{5}{*}{CORR}
 & 100    & 27.7   & 3.09  & 0.74   & 137                & 3.6  & 4.2 & 1.4 & 69.3 \\
 & 1\,000  & 244    & 9.00  & 2.43   & 1\,480              & 4.1  & 3.7 & 3.7 & 67.8 \\
 & 10\,000 & 2\,362  & 27.2  & 14.7   & 33\,411             & 4.2  & 1.9 & 17.3 & 68.2 \\
 & 100\,000 & 23\,902 & 95.1 & 94.1 & n/a$^{\dagger}$ & 4.2 & 1.0 & 46.7 & 70.9 \\
 & 1\,000\,000 & $2.5 \times 10^{5}$ & 333 & 330 & n/a$^{\dagger}$ & 4.0 & 1.0 & 39.3 & 77.3 \\
\bottomrule
\end{tabular}
\end{table}
\paragraph{TPDM.}
The empirical TPDM (Section~\ref{estimation}; unit-diagonal normalized Gram matrix of exceedance angles) is positive semidefinite with $\operatorname{rank}(\hat\Sigma)\le n_{\mathrm{exc}}$: it is formed from $n_{\mathrm{exc}}=400$ tail vectors (top $q=1\%$ of $n_{\mathrm{sample}}=40{,}000$), so at most $400$ directions, whereas correlation uses the full sample and is full-rank. With $n_{\mathrm{exc}}$ fixed and $k=\lfloor\sqrt{n}\rfloor$ growing, the two cross near $n\approx1.6\times10^5$; by $n=10^6$ one has $k=1000>n_{\mathrm{exc}}$, so the factorization is over-parameterized and the top-$k$ subspace captures all signal (Table~\ref{tab:spectral}: $\lambda_k=\lambda_{k+1}=0$, $\%\!\operatorname{var}_k=100$). Population TPDM rank is $\le k^*$ under a $k^*$-factor model, but this empirical ceiling dominates and is an artifact of the fixed-exceedance generator, not of empirical TPDMs whose rank grows with the sample.

Spectrally, top-$k$ eigenvalues hold $>96\%$ of trace at all sizes, yet gaps collapse: by $n=10^4$, $\gamma_k$ falls $5.0\to1.1$ and $\gamma_{k+1}\to1.0$, bunching the leading spectrum and creating near-degenerate optima. Fixed $n_{\mathrm{exc}}$ against growing $n$ thus makes TPDM a harder SymNMF target at scale. Full-matrix $\kappa^+$ is inflated by the near-zero tail ($>\!10^{10}$ for $n\ge10^3$); the signal-subspace conditioning is ${\sim}10^7$ (Table~\ref{tab:spectral}). At $n=10^5$, $\gamma_k\approx1$ and $r_{\mathrm{eff}}\sim1.3$ as the common factor dominates ($\lambda_1\sim O(n)$).

\paragraph{Correlation.}
Built from the full sample, correlation stays full-rank after Ledoit--Wolf shrinkage when $n/n_{\mathrm{sample}}>0.1$ (Appendix~\ref{app:matrices}). The noise ridge $\sigma_\varepsilon^2 I$ keeps $\kappa^+$ moderate (${\sim}33{,}000$ at $n=10^4$) but lifts $r_{\mathrm{eff}}$ to ${\sim}4$, so top-$k$ captures only ${\sim}68\%$ of trace. Spectral separation strengthens with $n$ ($\gamma_{k+1}$: $1.4\to17.3\to46.7\to{\approx}39$; Table~\ref{tab:spectral}), consistent with bulk-edge sharpening \cite{ledoit2004wellconditioned}. For $n\ge10^5$, $\gamma_k\approx1$ because $\lambda_k$ and $\lambda_{k+1}$ are both signal (model rank $k{+}1$); the relevant gap is $\gamma_{k+1}$. Effective rank stays ${\approx}4$ while $\%\!\operatorname{var}_k$ rises only slowly ($68\%\to77\%$ by $n=10^6$), since growing $k=\lfloor\sqrt{n}\rfloor$ retains progressively more of the noise bulk.

Correlation is positive definite through $n=10^4$ ($\lambda_{\min}=0.07$). At $n=10^5$ a small negative eigenvalue appears ($\lambda_{\min}\approx-0.018$): absolute correlation need not stay positive definite, and the noise ridge no longer masks it. Relative indefiniteness is tiny ($|\lambda_{\min}|/\lambda_{\max}\sim7\times10^{-7}$), so $\kappa^+$ is undefined but $\gamma_{k+1}=46.7$ remains sharp. At $n=10^6$, $\lambda_{\min}\approx-0.17$ with the same relative scale, confirmed by distributed matrix-free Lanczos on the $4$\,TB matrix.

The two types are complementary: TPDM concentrates variance but loses gaps at scale; correlation spreads variance but keeps a clean signal--noise boundary. Neither is uniformly easier.

\subsection{Computational Methodology}\label{snmf:compute}

Every solver reduces, at each iteration, to a small number of GPU primitives, dominated by two dense cuBLAS products: $S \times H$ ($O(n^2 k)$, \texttt{sgemm}) and $H^\top H$ ($O(nk^2)$, \texttt{syrk}).\footnote{When $S$ is pre-sparsified (e.g.\ by thresholding), PyTorch dispatches $S \times H$ to cuSPARSE automatically. However, factor models with a common factor produce dense matrices, so all benchmarks here use dense cuBLAS.} The remaining work, the element-wise projection $\max(H, 0)$ and the optimizer state updates, is $O(nk)$ and negligible by comparison. For $k \ll n$, the $S \times H$ product dominates even $H^\top H$: at $n = 10^5$ a single dense \texttt{sgemm} runs at near-peak throughput (${\sim}3$\,ms), and non-GEMM overhead (projections, state updates, convergence checks) accounts for $<10\%$ of wall-clock at $n \ge 10^4$. Cross-solver wall-clock therefore largely tracks iteration count.

All solvers are implemented in PyTorch, with gradients computed analytically (Section~\ref{snmf:loss}) via cuBLAS and multi-GPU communication via \texttt{torch.distributed} (NCCL). Phase~1 ($n \le 10^4$) runs on a single NVIDIA GB200 (Grace--Blackwell) GPU. The trace-identity reformulation keeps the $n = 10^5$ matrix within single-GPU memory (Table~\ref{tab:memory}), but Phase~2 shards for throughput and capacity: $n = 10^5$ across the four GPUs of one node, and $n = 10^6$ row-sharded across 16 nodes (64 GPUs) once the matrix exceeds single-GPU memory (Section~\ref{results:phase2}). Reported timings and residuals are in fp32. As a precision check we also ran the same solves with TF32 Tensor-Core multiplies (10-bit mantissa, fp32 accumulation); for SymNMF the dominant rounding error arises from cancellation in the trace-identity loss (Section~\ref{snmf:loss}), not from individual matrix products, and TF32 leaves converged $E_t$ unchanged at the levels reported here ($E_t$ between $10^{-3}$ and $10^{-2}$). We benchmark at $n \in \{10^2,\, 10^3,\, 10^4,\, 10^5,\, 10^6\}$; first-order gradient, adaptive first-order, and second-order methods are run in both projected and LogSpace variants to test the memory-length prediction of Section~\ref{snmf:logspace}.

\subsection{Benchmark Matrix Construction}\label{snmf:matrices}

Both matrix types are generated from a shared factor model $X = AZ + \varepsilon$, where $A \in \mathbb{R}_+^{n \times (k+1)}$ is a block-structured mixing matrix with one common factor and $k = \lfloor\sqrt{n}\rfloor$ group factors of variable size (${\pm}\,40\%$ of $n/k$). Distributional assumptions match each matrix's foundation; sample size is fixed at $n_{\mathrm{sample}} = 40{,}000$ (Appendix~\ref{app:matrices}):

\begin{center}
\begin{tabular}{lll}
\toprule
 & \textbf{TPDM} & \textbf{Correlation} \\
\midrule
Factor distribution $Z$ & Pareto(2) & Gaussian \\
Idiosyncratic noise $\varepsilon$ & Pareto(2) & Gaussian \\
Estimator & Angular measure (top $1\%$ exceedances) & Absolute Pearson ($|C_{ij}|$) \\
\bottomrule
\end{tabular}
\end{center}

Both matrices are symmetric, non-negative, with unit diagonal and values in $[0, 1]$. Sharing $A$ gives the same latent factor structure: correlation captures full-sample dependence; the TPDM captures dependence conditional on extremes.

Our primary factorization rank is $k = \lfloor\sqrt{n}\rfloor$, used throughout unless stated otherwise
(e.g.\ $k=10$ at $n=10^2$, up to $k=1000$ at $n=10^6$).
The true generative rank is one higher ($\lfloor\sqrt{n}\rfloor + 1$: common factor plus $k$ group factors); setting the factorization rank one below forces solvers to absorb the common factor into the group columns of $H$, a mild misspecification that practitioners face by default. The $\sqrt{n}$ scaling balances expressiveness and parsimony ($\sqrt{n}$ groups of ${\sim}\!\sqrt{n}$ members) and arises in random matrix theory \cite{marchenko1967distribution} and community detection \cite{lei2015consistency}. At Phase~1 scales ($n \le 10^4$) we additionally stress-test a \emph{fixed} rank $k_{\mathrm{fix}} = 25$, calibrated to the ${\sim}\!15$--$70$ factors in commercial equity models such as Barra USE4 \cite{menchero2011barra}. That under-ranked setting checks robustness when $k$ is far below the generative rank (Section~\ref{snmf:k25}); we do not repeat the fixed-$k$ stress test in Phase~2. Rows of $A$ are randomly permuted before computing $S$ so solvers cannot exploit block-diagonal structure. Full details are in Appendix~\ref{app:matrices}.

\paragraph{Initialization and seeds.}\label{snmf:init}
All solvers share the same scaled random non-negative initialization per seed (entry-wise $\sim \mathcal{U}[0, \sqrt{\bar{S}/k}]$, with $\bar{S}$ the mean entry of~$S$); one MU variant additionally uses NNDSVD \cite{boutsidis2008svd}. We run three seeds per configuration and report best and mean $E_t$. Solution quality is highly reproducible (coefficient of variation of $E_t$ below $2\%$ on correlation and below $15\%$ on TPDM at $n = 10^4$); the elevated TPDM variance reflects near-singular spectra (Table~\ref{tab:spectral}), not solver instability. Large-scale runs ($n \ge 10^5$) report each solver's best configuration confirmed across three seeds.

\section{Results}\label{results}

Adaptive first-order methods (AdaGrad and its stochastic variants) alone combine universal convergence with scaling to $n = 10^6$; classical baselines (multiplicative updates, projected gradient, second-order, and deep unfolding) stall on hard TPDM spectra or trail on wall-clock. We establish this with a two-phase funnel. Phase~1 (Section~\ref{snmf:perf}) screens all seven families at $n \le 10^4$ by the strict convergence criterion and then cost. Phase~2 (Section~\ref{results:phase2}) tests those that meet both thresholds on $n \in \{10^5,\, 10^6\}$, where a failed run can cost tens of GPU-hours. Between the two, Section~\ref{results:logspace} isolates projection versus softplus and shows that the effect is mediated by optimizer memory length.

\subsection{Cross-Family Comparison}\label{snmf:perf}

Phase~1 spans $n \in \{10^2,\, 10^3,\, 10^4\}$, with $n = 10^4$ decisive for selection. We report $E_t$, wall-clock, and the projected-gradient norm under~\eqref{eq:conv_and} (Section~\ref{snmf:convergence}). Ranking by $E_t$ alone would promote line-search and mirror-descent variants that miss the KKT gate.

\subsubsection*{Phase 1: Algorithm Selection ($n \le 10^4$)}

We evaluate over 100 hyperparameter runs across the 30+ configurations of Section~\ref{snmf:algos}, covering all seven families at $n \in \{10^2,\, 10^3,\, 10^4\}$ on both TPDM and correlation matrices at $k = \lfloor\sqrt{n}\rfloor$.
Each configuration uses 3~seeds (6~runs per size, 18~total). A fixed-rank $k = 25$ stress test is reported separately (Section~\ref{snmf:k25}).
Table~\ref{tab:phase1} reports the best configuration per method, ranked by convergence rate then wall time at $n = 10^4$.

\begin{table}[ht]
\centering
\caption{Phase~1: best configuration per tabulated method on 18~runs ($n \in \{10^2,10^3,10^4\}$, both types, $k=\lfloor\sqrt{n}\rfloor$, 3~seeds; GB200). Within-family variants that converge on no $n = 10^4$ run (PGD, heavy-ball, exponentiated gradient, trace-norm MU, ANLS, L-BFGS, PGNCG) are omitted here and discussed in Section~\ref{snmf:perf}.
Conv.\ = runs satisfying~\eqref{eq:conv_and}; wall columns = mean seconds among converged runs (6~per size); ``---'' = none converged; Params = tunable hyperparameters excluding the shared iteration budget.
Ordered by convergence then wall at $n=10^4$. Dashed rule: 100\% and ${<}\,4$\,s at $n=10^4$ (used in Phase~2); solid rule: remaining 100\% methods (above 4\,s).}\label{tab:phase1}
\small
\begin{tabular}{llccccr}
\toprule
\textbf{Method} & \textbf{Family} & \textbf{Conv.} & \multicolumn{3}{c}{\textbf{Wall (s) at $n$}} & \textbf{Params} \\
\cmidrule(lr){4-6}
 & & & \textbf{100} & \textbf{1{,}000} & \textbf{10{,}000} & \\
\midrule
ADMM                    & F6 Splitting     & 18/18 & 0.35 & 0.22 & 0.48 & 1 \\
RMSprop                 & F3 Adaptive      & 18/18 & 0.35 & 0.44 & 0.99 & 3 \\
Block-SVRG \textsc{AdaptGrow} & F3 Adaptive & 18/18 & 0.56 & 0.72 & 1.67 & 3 \\
AdaGrad                 & F3 Adaptive      & 18/18 & 1.06 & 0.98 & 1.68 & 1 \\
Row-Stoch.\ SVRG        & F3 Adaptive      & 18/18 & 0.68 & 1.65 & 2.65 & 3 \\
Piecewise AdaGrad       & F3 Adaptive      & 18/18 & 2.19 & 0.66 & 3.22 & 2 \\
\hdashline
Adan                    & F3 Adaptive      & 18/18 & 1.33 & 1.34 & 4.64 & 2 \\
HALS                    & F4 BCD           & 18/18 & 0.78 & 1.42 & 8.66 & 1 \\
LAI-SymNMF              & F6 Splitting     & 18/18 & 2.15 & 4.11 & 15.22 & 2 \\
Adam                    & F3 Adaptive      & 18/18 & 0.70 & 1.95 & 17.03 & 2 \\
NAdam                   & F3 Adaptive      & 18/18 & 0.70 & 2.17 & 17.64 & 2 \\
\midrule
MU              & F1 Baseline      & 15/18 & 0.17 & 0.62 & 7.89  & 0 \\
APG / FISTA     & F2 PGD           & 15/18 & 0.12 & 0.38 & 16.49 & 1 \\
Newton (diag.)  & F5 Second-order  & 15/18 & 0.37 & 3.45 & 24.61 & 2 \\
SymNMF-Net      & F7 Deep Unfold.  & 12/18 & 36.08 & 44.82 & --- & 3+ \\
\bottomrule
\end{tabular}
\end{table}

At $n = 10^2$ and $n = 10^3$, most families converge reliably (even MU and PGD reach low $E_t$ within budget), and $n = 10^3$ already exceeds typical prior SymNMF sizes \cite{kuang2015symnmf,he2011symmetric}. Differentiation starts at $n = 10^4$, where spectra sharpen (Section~\ref{snmf:spectral}) and weaker methods fail.

Eleven methods converge on all 18~runs: ADMM, RMSprop, AdaGrad, Piecewise AdaGrad, Row-Stochastic SVRG, Block-SVRG \textsc{AdaptGrow}, Adan, NAdam, Adam, HALS, and randomized LAI-SymNMF~\cite{hayashi2024randomized}.
Convergence alone no longer separates them; \emph{cost} does.
Six finish under 4\,s at $n = 10^4$ (dashed rule in Table~\ref{tab:phase1}); the other five (Adan, HALS, LAI-SymNMF, Adam, NAdam) are $1.4$--$5\times$ slower.
The gap is iteration count, not per-step cost: dense solvers share a ${\sim}\,1.3$--$1.7$\,ms floor (Section~\ref{snmf:compute}), so Adam's $6{,}900$--$8{,}300$ iterations cost far more wall time than AdaGrad's $450$--$1{,}080$.
We discuss each family next, focusing on $n = 10^4$.

\paragraph{Family 1: Multiplicative / Mirror Descent.}
At $n = 10^4$ MU fails on all three TPDM runs: the projected gradient stalls ${\sim}\,10\times$ above threshold through the full iteration budget, matching the sublinear rate of~\eqref{eq:mu_update}. Trace-norm MU is highly $\lambda$-sensitive (fails once $\lambda \ge 1$), and mirror descent (exponentiated gradient) fails on every $n = 10^4$ run.

\paragraph{Family 2: Projected Gradient Methods.}
APG with Armijo backtracking and adaptive restart fails only on $n = 10^4$ TPDM, where FISTA momentum~\cite{beck2009fast} overshoots on the near-unit gap ($\gamma_k \approx 1$; Table~\ref{tab:spectral}), restarts, and loses acceleration. Plain PGD and heavy-ball fail on every $n = 10^4$ run. LogSpace variants do not improve (Section~\ref{snmf:logspace}).

\paragraph{Family 3: Adaptive First-Order Methods.}
RMSprop with $(\eta, \mu, \alpha) = (0.05,\, 0.5,\, 0.99)$ uses a short EMA window ($\alpha = 0.99$, ${\sim}100$ iterations, vs.\ Adam's ${\sim}1{,}000$) for per-coordinate scaling, moderate momentum ($\mu = 0.5$) for acceleration without projection conflicts, and an aggressive learning rate ($\eta = 0.05$).
It needs 3~hyperparameters and is sensitive to each: $\alpha = 0.9$ drops convergence by 30--40\%, $\mu = 0.9$ causes oscillation, and $\eta$ must be tuned per scale.

AdaGrad with $\eta = 2.0$ matches that reliability with a single hyperparameter.
Adam ($\beta_1 = 0.9$, $\beta_2 = 0.999$) converges too, but needs far more iterations at the shared dense-step floor (Table~\ref{tab:phase1}).
That gap matches the stationary SymNMF landscape: AdaGrad's infinite accumulator $G_t = \sum_{i=1}^{t} g_i^2$ keeps early gradient statistics informative (no distribution shift), whereas Adam's forgetting ($\beta_2 = 0.999$) discards them and its first-moment EMA interacts poorly with clamp projection (Section~\ref{snmf:logspace}).
NAdam and Adan behave like Adam across 80+~Family-3 configurations, and all three degrade under $k = 25$ (Section~\ref{snmf:k25}).

This creates a tension. AdaGrad's infinite accumulator is the \emph{right} memory policy for the stationary landscape, aggregating curvature across iterations, but the \emph{wrong} policy for long runs: growing $G_t$ exhausts the effective learning-rate budget, and the high optimal $\eta = 2.0$ only delays saturation to ${\sim}\,600$ iterations.
RMSprop escapes saturation by exponential forgetting, but then discards the stationarity advantage that makes AdaGrad strong.
To keep full-history scaling while escaping saturation, we propose \textbf{Piecewise AdaGrad} (likewise the stochastic AdaGrad variants below; no regret or KKT-rate claims): run pure AdaGrad until objective stagnation (criterion~3 of Section~\ref{snmf:convergence}), recalibrate the accumulator per entry to the current gradient scale, and resume.
Reset mechanics are adaptive, with one exposed scale $c$ (\texttt{memory\_iters}) and a large-scale gradient-margin multiplier $c_r$:
\emph{when} to reset follows the existing stagnation criterion;
\emph{whether} to reset again depends on whether the previous reset reduced the loss (self-terminating);
\emph{how long} to wait after a reset follows accumulator doubling;
\emph{how much} uses a median-gradient baseline
$G_i \leftarrow \max(c \cdot g_{t,i}^2,\; c \cdot \mathrm{median}(g_t^2))$,
with $c$ the only new parameter.
Between resets the method keeps AdaGrad's full history; the reset is a phase-boundary detector, not RMSprop-style continuous EMA forgetting.

With $(\eta, c) = (1.0, 10)$, the reset should fire only when the iterate is stale, not merely far from the optimum. An absolute cap ($E_t < 0.5$) works at small scale but fails at $n = 10^6$, where random initialization keeps $E_t$ above any fixed cap through a long saturated crawl (reset suppressed for ${\sim}1{,}900$ iterations after the objective has stalled by ${\sim}150$); removing the cap over-resets near convergence at $n = 10^5$.
We therefore gate on a \emph{scale-free} stationarity margin ($\lVert \nabla_{\mathrm{proj}}\rVert_F/(nk) > c_r\,\tau_g$, $c_r \in \{3, 10, 30\}$), which escapes the large-scale plateau early and self-terminates near the KKT gate; a median-squared-gradient floor additionally blocks premature early-phase resets.
Where diagonal AdaGrad converges quickly, no reset fires and the two methods agree.
The reset cost appears only at $n = 10^2$, where AdaGrad never saturates: the detector sometimes trips on a shallow early plateau and wastes iterations, even though per-iteration cost is \emph{lower} than diagonal AdaGrad ($1.22$ vs.\ $1.48$\,ms).
Aggressive $c = 1$ diverges on large-$k$ correlation, so the reset scale matters.
Versus RMSprop, Piecewise has fewer hyperparameters but is slower at Phase~1 (Table~\ref{tab:phase1}).

\textbf{Row-Stochastic AdaGrad} samples a fraction of rows per iteration, cutting per-step cost from $O(n^2 k)$ to $O(|I|\,n\,k)$.
Full-batch ($|I| = n$, $\eta = 2.0$) matches diagonal AdaGrad, validating the implementation.
Naive sub-sampling ($|I|/n \le 0.5$) fails the KKT gate on several $n = 10^4$ runs.
Two mechanisms drive this: (a)~row-sampling gradient noise, and (b)~uneven accumulator growth on infrequently sampled rows.

We address (b) with \textbf{Row-Stochastic SVRG}: periodic full-gradient snapshots $\mu$, fresh gradients on sampled rows and snapshot gradients elsewhere, so every row of $H$ updates each iteration and the accumulator grows uniformly while retaining $O(|I|\,n\,k)$ cost between snapshots.
With $(\eta, |I|/n) = (2.0, 0.5)$ that closes the gap naive sub-sampling could not.
A \emph{stochastic reset} hybrid (row sampling + Piecewise resets) fails: noise plus resets is destabilizing, and the reset detector, tuned to smooth full-batch loss, misreads stochastic stagnation.
At Phase~1 the dense step already sits on the ${\sim}\,1.3$--$1.7$\,ms floor, so row sub-sampling buys no wall-clock here.
\paragraph{Entry-stochastic sampling for AdaGrad.}
The row-level stochastic variants above sample a subset $I \subset [n]$ of rows and compute the \emph{exact} partial gradient for those rows.
A natural alternative, suggested by the structure of the problem, is to sample individual \emph{entries} of~$S$ \cite{duchi2011adagrad}: pick $m$ pairs $(i,j)$ uniformly from $[n] \times [n]$, compute the per-entry gradient contributions
\begin{equation}\label{eq:entry_grad}
  g_i \mathrel{+}= 2\bigl(h_i^\top h_j - S_{ij}\bigr)\,h_j, \qquad
  g_j \mathrel{+}= 2\bigl(h_i^\top h_j - S_{ij}\bigr)\,h_i,
\end{equation}
and update only the affected rows of~$H$.
Each sampled entry costs $O(k)$, so the per-iteration cost is $O(mk)$, fully sub-quadratic when $m \ll n^2$, versus $O(|I|\,nk)$ for row sampling, which remains $O(n^2 k)$ unless $|I| \ll n$.

Entry sampling is the natural stochastic gradient for the bilinear SymNMF loss: since $f(H) = \sum_{i,j}(S_{ij} - h_i^\top h_j)^2$, each $(i,j)$ pair provides an unbiased gradient estimator (up to a global factor absorbed into~$\eta$).
We implement entry-stochastic variants of all three row-sampled methods: base stochastic AdaGrad, stochastic SVRG, and stochastic reset.

However, entry sampling with a fixed small fraction ($m/n^2 \le 0.2$) fails to converge at $n = 10^3$ and above, across all three variants and twelve hyperparameter settings per size.
The objective value $E_t$ decreases to roughly the correct magnitude ($10^{-2}$--$10^{-3}$), but the projected-gradient KKT norm $\|\nabla_{\mathrm{proj}} f(H)\|_F / (nk)$ remains 100--200$\times$ above the convergence threshold.

The failure is a \emph{criterion mismatch}, not an information-theoretic impossibility of SGD: updates use sparse entry gradients, while certification requires the exact full gradient $\nabla_H f(H) = 4\bigl(H(H^\top H) - SH\bigr)$ below $\tau_g = 10^{-4}$.
With entry fraction $m/n^2 = 0.01$ at $n = 10^4$, each iteration sees only $10^6$ of $10^8$ entries, so the certified full-gradient residual remains large long after the stochastic objective has plateaued.
Closing that residual at fixed fraction would require an impractically large iteration budget under our strict gate.

\paragraph{Variance reduction at fixed entry fraction does not close the KKT gap.}
We tested three orthogonal variance-reduction strategies (SVRG \cite{johnson2013svrg}, Polyak--Ruppert iterate averaging \cite{polyak1992acceleration, ruppert1988efficient}, and dual averaging via the ADAGRAD-RDA framework of \cite{xiao2010dual}), each closing only ${\sim}\,6\%$ of the KKT gap at fixed entry fraction.  All three modify how stochastic gradient information is \emph{aggregated} rather than how much per-iteration entry budget is available, so the certification mismatch above is unaffected.  Detailed treatment, including the dual-averaging proximal step and its primal warm-up, is in Appendix~\ref{app:failed}.

\paragraph{Sketching is not a natural alternative.}
Replacing $S$ with a low-rank sketch \cite{halko2011finding, woodruff2014sketching} introduces a bias $\|S - \hat S\|_F$ that, for the dependence matrices in this work (slowly decaying spectra; Table~\ref{tab:spectral}), exceeds the KKT threshold unless the sketch rank is $r = O(\sqrt{n})$ or larger, at which point the sub-quadratic advantage erodes.  Practitioners also typically need the factorization of the \emph{true} $S$ rather than a surrogate.  Detailed argument in Appendix~\ref{app:failed}.

\paragraph{Adaptive growing-batch stochastic AdaGrad.}
The analysis above identifies a fundamental tension: fixed small entry fractions provide cheap per-step cost but cannot satisfy strict KKT, while full-batch iterations are expensive but guarantee convergence.
The resolution is to \emph{start} with a small entry fraction for fast initial progress and \emph{grow} it adaptively until the gradient is exact enough for KKT certification.

This strategy is theoretically grounded in the ``norm test'' of \cite{friedlander2012hybrid}: increase the sample size when the stochastic gradient norm exceeds a fraction of the true gradient norm, i.e., when gradient noise dominates algorithmic progress.
\cite{byrd2012sample} provide complementary sample complexity bounds, showing that the optimal sample size grows as convergence is approached.
The key insight is that stagnation is precisely the signal that gradient noise is the bottleneck, not optimization dynamics.

We implement an \textbf{Adaptive Growing-Batch} (\textsc{AdaptGrow}) variant that uses stagnation detection from the existing convergence criterion (Section~\ref{snmf:convergence}) as the growth trigger:
\begin{enumerate}[leftmargin=2em,topsep=2pt,itemsep=1pt]
  \item Initialise with entry fraction $\phi_0$ (e.g., $\phi_0 = 0.1$), so each iteration samples $m = \phi_0 n^2$ entries at cost $O(\phi_0 n^2 k)$.
  \item After each convergence check (every 10~iterations), if the objective stagnation criterion fires ($|\Delta E_t| < \texttt{tol}$) for 3~consecutive checks, \emph{double} the entry fraction: $\phi \leftarrow \min(2\phi,\, 1)$.
  \item Once $\phi = 1$, every iteration computes the exact full gradient; subsequent steps are diagonal AdaGrad, and the KKT norm converges to zero at the standard rate.
\end{enumerate}
Crucially, no schedule hyperparameters are introduced: the doubling rate and timing are entirely determined by the solver's own stagnation diagnostics, following the norm-test principle of \cite{friedlander2012hybrid}.
The only exposed parameters are the learning rate $\eta$ and the initial entry fraction $\phi_0$.

The growth is geometrically self-regulating: cheap noisy steps drive early progress and growth fires only as the iterate nears stationarity, so the batch reaches full size after at most $\lceil\log_2(1/\phi_0)\rceil$ doublings and the few expensive full-batch iterations occur when the iterate is already near-converged ($T_{\mathrm{final}} \ll T_{\max}$).

We implement \textsc{AdaptGrow} for all three entry-stochastic variants (base, SVRG, reset).
At $n = 10^3$ with $(\eta, \phi_0) = (1.0, 0.25)$, the uniform-entry variants are competitive with the Phase~1 baselines; the entry-sampled SVRG snapshot fires only a handful of times before the batch saturates.

The picture changes at $n = 10^4$, but in \emph{cost}, not convergence.
Uniform-entry variants still meet the Phase~1 gates, yet each iteration is dominated by sparse scatter--gather: cuSPARSE is an order of magnitude slower than a dense step of the same FLOP count, so wall time lags full-batch AdaGrad despite far fewer iterations.
The bottleneck is the sampling implementation, not the optimizer: this motivates the block-structured estimator below, which restores dense cuBLAS matmuls but introduces an accumulator pathology that an SVRG hybrid then repairs.

\paragraph{Block-structured sampling and the AdaGrad accumulator.}
To restore dense matmuls, we implement a \emph{block-structured} entry-stochastic estimator: at each iteration sample row and column index sets $I, J \subset [n]$ with $|I| = |J| = \lceil\sqrt{\phi}\,n\rceil$ and compute the gradient from the $I \times J$ residual sub-block via three dense (cuBLAS) matmuls.
The total entry coverage $|I|\cdot|J| = \phi n^2$ matches naive uniform entry sampling; without explicit reweighting of missing pairs the block residual is not an unbiased estimator of the full gradient, but AdaGrad's per-element scaling absorbs a global constant. The more serious issue is per-row imbalance in the accumulator, as follows.

Each block step updates only rows of $H$ indexed by $I \cup J$. A touched row aggregates $|I|$ or $|J|$ pair residuals at once, so its squared gradient is roughly $|J|^2$ times a single-pair term (since $|I|=|J|$), while untouched rows get $g = 0$ and leave $G_i$ frozen.
Over the stochastic phase, frequently sampled rows therefore build a much larger $G_i = \sum_s g_{s,i}^2$ than rarely sampled ones.
When \textsc{AdaptGrow} saturates ($\phi = 1$) and the gradient becomes dense and exact, steps still use $1/\sqrt{G_i}$ from that uneven history: large $G_i$ yields vanishing steps that cannot clear the KKT residual; small $G_i$ yields oversized steps that oscillate.
On a well-specified rank this stays latent through Phase~1, but under $k = 25$ (Section~\ref{snmf:k25}) every $n = 10^4$ run fails to converge ($E_t \sim 10^{-2}$, projected gradient ${\sim}6\times$ above threshold). The imbalance grows with the length of the stochastic phase, so the same failure is expected at larger $n$ even when $k = \lfloor\sqrt{n}\rfloor$.

The fix is an \emph{SVRG hybrid snapshot} in the stochastic phase. At a frequency of $\sim 1/\phi$ iterations we compute $\mu = \nabla f(H)$ and use $\mu_i$ as a placeholder gradient on rows outside $I \cup J$ (replaced by the fresh block estimate on touched rows), so untouched rows still contribute $\mu_i^2$ to~$G$ and the per-row update density is smoothed \emph{preemptively}, before heterogeneity accumulates.
With $\phi_0 = 0.5$ (each block already covers ${\approx}\,70\%$ of rows per iteration), the SVRG hybrid runs at dense ${\sim}\,1.35$\,ms/iteration, an order of magnitude faster than uniform entry sampling, and passes the $k = 25$ stress test where plain block sampling fails (Section~\ref{snmf:k25}). A Piecewise-style reset once the batch is full does \emph{not} repair heterogeneity and hurts when stacked on the snapshot (Appendix~\ref{app:failed}).

\paragraph{Why this construction is specific to entry-stochastic AdaGrad.}
Adaptive batch growth adds nothing to \emph{row-sampled} stochastic AdaGrad (already reliable via SVRG, and its value at scale is precisely in \emph{not} growing back to full batch), and it cannot be combined with RMSprop under the natural dense masking of unsampled coordinates: the EMA decays on zeros while AdaGrad's $G$ freezes ($g_t=0$ leaves $G_t$ unchanged). All four stochastic variants in this work (row-sampled, entry-sampled, SVRG, \textsc{AdaptGrow}) are therefore built on AdaGrad; the structural argument, including why SVRG does not transfer to RMSprop, is in Appendix~\ref{app:failed}.

\paragraph{Family 4: Block Coordinate Descent.}
HALS ($\alpha = 10^2$) converges reliably but is far slower at $n = 10^4$ (Table~\ref{tab:phase1}): column-by-column sweeps inflate both iteration count and per-step cost. ANLS fails on every $n = 10^4$ run.

\paragraph{Family 5: Second-Order Methods.}
Diagonal Newton fails on all $n = 10^4$ TPDM runs (diagonal curvature misses $HH^\top$ coupling) and, even on correlation, stays far slower than RMSprop at the same $O(n^2 k)$ per-step cost. L-BFGS converges on no projected run (LogSpace works only at small scale; Section~\ref{results:logspace}). PGNCG fails every $n = 10^4$ run under our criterion.

\paragraph{Family 6: Splitting Methods.}
ADMM ($\rho = 500$) is competitive on wall-clock among non-gradient methods: variable splitting decouples the $k \times k$ $H$-update from the clamp $W$-update. Randomized LAI-SymNMF~\cite{hayashi2024randomized} converges but is slow and degrades under $k = 25$.

\paragraph{Family 7: Deep Unfolding.}
SymNMF-Net (5~blocks) fails on every $n = 10^4$ run: per-matrix training removes the amortization benefit, and training is two orders of magnitude slower than a direct solve even at small~$n$.

\phantomsection
\subsubsection*{Robustness check: fixed-rank ($k = 25$) stress test}\label{snmf:k25}
We re-run Phase~1 at fixed $k = 25$ for $n \in \{10^2, 10^3, 10^4\}$ (Section~\ref{snmf:matrices}): deliberately under-ranked at $n = 10^4$ (true rank $\approx 101$).
All six Phase~2 methods retain convergence and stay fast.
Several methods that succeed at $k = \lfloor\sqrt{n}\rfloor$ degrade: plain (non-SVRG) block and row stochastic AdaGrad fail every $n = 10^4$ run, and Adan, NAdam, LAI-SymNMF, and MU each lose multiple runs.
The SVRG hybrids stay robust where their plain counterparts fail; we do not repeat this check in Phase~2.

\subsubsection*{Phase~2 selection}

Phase~2 uses the six methods under 4\,s at $n = 10^4$ (dashed rule in Table~\ref{tab:phase1}): ADMM, RMSprop, Block-SVRG \textsc{AdaptGrow}, AdaGrad, Row-Stochastic SVRG, and Piecewise AdaGrad.
Three of them (Row-Stochastic SVRG, Block-SVRG \textsc{AdaptGrow}, and Piecewise AdaGrad) target effects that the Phase~1 per-iteration overhead floor hides (cheaper per-step gradients; accumulator resets on long runs). We test whether those pay off once the dense step dominates at $n \ge 10^5$.

\subsection{Softplus Reparameterization Analysis}\label{results:logspace}

Section~\ref{snmf:logspace} introduced softplus as an alternative to projection for $H \ge 0$. We quantify the effect on representative adaptive and second-order solvers.

\paragraph{Head-to-head comparison.}
Table~\ref{tab:logspace} reports strict convergence under projected vs.\ LogSpace parameterizations. The effect is scale-dependent and ordered by optimizer memory length. At small scale softplus does not hurt, and L-BFGS converges in LogSpace through $n \le 10^3$ while it fails in projected space. By $n = 10^4$, LogSpace fails for \emph{long-memory} adaptive methods (AdaGrad from $n = 10^3$; Adam at $n = 10^4$), while short-memory RMSprop converges at every scale. Where RMSprop converges in LogSpace it also reaches lower $E_t$ ($28$--$130\times$ on correlation): short-memory methods gain from the smooth surface; long-memory accumulators do not at this scale.

\begin{table}[ht]
\centering
\caption{Projected (P) vs.\ softplus/LogSpace (L) convergence (runs out of 6 per scale; 3 seeds $\times$ 2 types, $k = \lfloor\sqrt{n}\rfloor$; GB200). LogSpace enables L-BFGS; long-memory AdaGrad/Adam fail at scale; RMSprop is unchanged.}\label{tab:logspace}
\small
\begin{tabular}{l rr rr rr}
\toprule
& \multicolumn{2}{c}{$n = 10^2$} & \multicolumn{2}{c}{$n = 10^3$} & \multicolumn{2}{c}{$n = 10^4$} \\
\cmidrule(lr){2-3}\cmidrule(lr){4-5}\cmidrule(lr){6-7}
\textbf{Solver} & \textbf{P} & \textbf{L} & \textbf{P} & \textbf{L} & \textbf{P} & \textbf{L} \\
\midrule
RMSprop & 6 & 6 & 6 & 6 & 6 & 6 \\
Adam    & 6 & 6 & 6 & 6 & 6 & 0 \\
AdaGrad & 6 & 6 & 6 & 0 & 6 & 0 \\
Newton  & 6 & 6 & 6 & 5 & 3 & 0 \\
L-BFGS  & 0 & 6 & 0 & 6 & 0 & 0 \\
\bottomrule
\end{tabular}
\end{table}

The softplus chain rule injects a non-stationary factor $\sigma(\Theta_t) = \operatorname{sigmoid}(\Theta_t)$ into every gradient. Longer second-moment memory averages over a wider range of $\sigma$, so adaptive rates go stale; the table pattern tracks memory length and worsens with scale. Second-order methods have no such accumulator, and L-BFGS converges more readily on the smoother surface at small scale. Softplus still costs at scale: as $\Theta$ spreads, $\sigma(\Theta)$ and $\sigma'(\Theta)\to 0$ poorly scale Hessian estimates, so Newton and L-BFGS also fail in LogSpace by $n = 10^4$ (Newton from partial projected success).

The ordering matches memory length, though we have not run a controlled $\beta_2/\alpha$ sweep. Four of the six Phase~2 methods are long-memory AdaGrad-family solvers, so Phase~2 uses projected space. LogSpace RMSprop converges with lower reconstruction error; we leave that configuration for $n \ge 10^5$ untested.

\subsection{Phase 2: Large-Scale Feasibility ($n \ge 10^5$)}\label{results:phase2}

Phase~1 measured reliability and cost at $n \le 10^4$. Phase~2 runs the six selected methods on $n \in \{10^5,\, 10^6\}$: all six at $n = 10^5$ and the five AdaGrad-family methods at $n = 10^6$ (ADMM is omitted at that scale; it is the slowest on TPDM at $n = 10^5$). Each solver uses its best local sweep around the Phase~1 anchor (Table~\ref{tab:phase2_config}). At $n = 10^5$ both matrix types use $k = \lfloor\sqrt{n}\rfloor = 316$; at $n = 10^6$ correlation uses $k = 1000$, while TPDM uses $k = 400$ (its rank ceiling $n_{\mathrm{exc}} = 400$; Section~\ref{snmf:spectral}), so the two become distinct problems.

Row-Stochastic SVRG evaluates gradients on a row fraction between full-gradient snapshots. Block-SVRG \textsc{AdaptGrow} uses dense cuBLAS sub-blocks of relative size $\phi_t$, grown from $\phi_0$ toward~$1$ (Table~\ref{tab:phase2_config}).

At $n = 10^5$, $S$ is 40\,GB in fp32 and fits on one GB200 GPU, but the $O(n^2 k)$ product $SH$ is sharded across four GPUs of one node (${\sim}3.6\times$ per-iteration speedup, applied uniformly). At $n = 10^6$, $S$ is 4\,TB and is row-sharded across 16 nodes (64 GPUs; NCCL): each node holds a row block of $S$ and a full replica of $H$; communication is an \texttt{all\_gather} of $SH$ and an \texttt{all\_reduce} of its gradient ($O(nk)$ each), so replicas stay in sync.

\paragraph{Results at $n = 10^5$.}
All six solvers in Table~\ref{tab:phase2} converge on both types at $k = 316$, all three seeds, at the nominal gate $\lVert \nabla_{\mathrm{proj}}\rVert_F/(nk) < 10^{-4}$.
The five first-order methods share a common residual of $E_t \approx 1.27\times10^{-2}$ on correlation and $7.2\times10^{-4}$ on TPDM: once the gradient gate is met, the residual is set by the problem, not the optimizer.
ADMM reaches a lower floor of $8.1\times10^{-4}$ on correlation and $1.6\times10^{-6}$ on TPDM by splitting variables and avoiding the projected step-size ceiling near the boundary.

Wall time tracks the dense product $SH$. Piecewise AdaGrad is fastest on both types, then AdaGrad; both finish in under $100$ iterations, so full-batch steps beat the stochastic variants. ADMM is second on correlation and slowest on TPDM. Correlation uses larger steps than TPDM throughout, as in Table~\ref{tab:phase2_config}, matching the different conditioning in Table~\ref{tab:spectral}.

\begin{table}[t]
\centering
\small
\setlength{\tabcolsep}{5pt}
\caption{Phase~2 wall-clock and iterations to convergence on GB200 (fp32; means over seeds $\{7, 42, 99\}$, all $3/3$). Entries are wall\,(iters); seconds at $n = 10^5$ ($k = 316$, four GPUs), minutes at $n = 10^6$ (CORR $k = 1000$, TPDM $k = 400$, 16 nodes (64 GPUs); $\tau_g = 10^{-3}$ for TPDM at $n = 10^6$, else $10^{-4}$; Section~\ref{snmf:convergence}). ADMM only at $n = 10^5$. Bold: fastest per scale and type. $E_t$ in text.}
\label{tab:phase2}
\smallskip
\begin{tabular}{@{}l cc cc@{}}
\toprule
 & \multicolumn{2}{c}{Correlation} & \multicolumn{2}{c}{TPDM} \\
\cmidrule(lr){2-3}\cmidrule(l){4-5}
Solver & $n{=}10^5$ & $n{=}10^6$ & $n{=}10^5$ & $n{=}10^6$ \\
\midrule
AdaGrad & 7.9\,(82) & \textbf{2.0}\,(72) & 9.3\,(99) & 4.8\,(395) \\
Piecewise AdaGrad & \textbf{6.3}\,(72) & 37.1\,(1335) & \textbf{8.4}\,(99) & 5.7\,(475) \\
Row-Stoch.\ SVRG & 11.3\,(182) & 3.1\,(152) & 12.9\,(209) & 6.7\,(749) \\
Block-SVRG \textsc{AdaptGrow} & 13.0\,(122) & 2.9\,(122) & 12.4\,(112) & \textbf{4.0}\,(409) \\
RMSprop & 28.2\,(332) & 2.9\,(102) & 15.3\,(172) & 8.7\,(715) \\
ADMM & 6.7\,(102) & --- & 19.6\,(315) & --- \\
\bottomrule
\end{tabular}
\end{table}

\begin{table}[t]
\centering
\small
\setlength{\tabcolsep}{6pt}
\caption{Phase~2 configurations (local sweep around the Phase~1 anchor; three seeds). Bold: fastest run in Table~\ref{tab:phase2}. Correlation uses larger steps than TPDM.}
\label{tab:phase2_config}
\smallskip
\begin{tabular}{@{}ll ll@{}}
\toprule
Solver & Type & $n = 10^5$ & $n = 10^6$ \\
\midrule
\multirow{2}{*}{AdaGrad} & CORR & $\eta{=}96$ & $\bm{\eta{=}384}$ \\
 & TPDM & $\eta{=}32$ & $\eta{=}32$ \\
\multirow{2}{*}{Piecewise AdaGrad} & CORR & $\bm{\eta{=}128,\,c{=}10}$ & $\eta{=}0.75,\,c{=}200$ \\
 & TPDM & $\bm{\eta{=}32,\,c{=}10}$ & $\eta{=}0.5,\,c{=}20$ \\
\multirow{2}{*}{Row-Stoch.\ SVRG} & CORR & $\eta{=}128,\,|I|/n{=}0.75,\,s{=}10$ & $\eta{=}256,\,|I|/n{=}0.75,\,s{=}5$ \\
 & TPDM & $\eta{=}64,\,|I|/n{=}0.75,\,s{=}10$ & $\eta{=}16,\,|I|/n{=}0.75,\,s{=}5$ \\
\multirow{2}{*}{Block-SVRG \textsc{AdaptGrow}} & CORR & $\eta{=}64,\,\phi_0{=}0.75,\,s{=}10$ & $\eta{=}256,\,\phi_0{=}0.5,\,s{=}10$ \\
 & TPDM & $\eta{=}24,\,\phi_0{=}0.75,\,s{=}10$ & $\bm{\eta{=}20,\,\phi_0{=}0.75,\,s{=}5}$ \\
\multirow{2}{*}{RMSprop} & CORR & $\eta{=}0.4,\,\mu{=}0.5,\,\alpha{=}0.999$ & $\eta{=}32,\,\mu{=}0.25,\,\alpha{=}0.99$ \\
 & TPDM & $\eta{=}0.4,\,\mu{=}0.25,\,\alpha{=}0.999$ & $\eta{=}1.6,\,\mu{=}0.75,\,\alpha{=}0.999$ \\
\multirow{2}{*}{ADMM} & CORR & $\rho{=}500$ & --- \\
 & TPDM & $\rho{=}500$ & --- \\
\bottomrule
\end{tabular}
\end{table}

\paragraph{Results at $n = 10^6$.}
On correlation at $k = 1000$, all five AdaGrad-family methods converge on all three seeds to $E_t \approx 4.6\times10^{-3}$, lower than at $n = 10^5$ because larger $k$ retains more of the noise bulk.
AdaGrad is fastest, with bit-identical iteration counts across seeds; the stochastic variants and RMSprop cluster next; Piecewise AdaGrad is far slower, its reset cadence poorly matched to a long smooth descent.
The large $\eta$ in Table~\ref{tab:phase2_config} rescales $\eta\,g/\sqrt{G}$ globally and offsets the shared $O(1/\sqrt{t})$ factor in $1/\sqrt{G}$, while leaving relative per-coordinate weights $1/\sqrt{G_i}$ intact.

On TPDM at $k = 400$, the factorization is essentially exact in fp32, so $E_t \to 0$ and the projected gradient is precision-limited near $10^{-3}$. We certify at $\tau_g = 10^{-3}$ as in Sections~\ref{snmf:loss} and~\ref{snmf:convergence}.
All five methods converge on all three seeds. Block-SVRG \textsc{AdaptGrow} is fastest, then AdaGrad; Piecewise AdaGrad and RMSprop show the widest seed spreads, and RMSprop needs $\alpha = 0.999$.

\paragraph{Why the ordering splits by matrix type.}
AdaptGrow's sub-block step is cheaper than full-batch AdaGrad's, so the faster method is whichever keeps a comparable iteration count (Table~\ref{tab:spectral}).

On correlation, $\gamma_{k+1} \approx 39$ separates a rank-$1000$ subspace that AdaGrad captures in a short run, similar in length to $n = 10^5$. Cheap stochastic steps do not amortize, and with energy in only ${\sim}4$ factors ($r_{\mathrm{eff}} \approx 4$) a random sub-block is high-variance, so AdaptGrow needs substantially more iterations and full-batch AdaGrad remains faster.

On TPDM, $\gamma_k \approx 1$ and large $\kappa^+$ force a long descent that lengthens with~$n$. Energy concentrates in one common factor ($r_{\mathrm{eff}} \to 1$), so rows are near-collinear and a random sub-block is low-variance: AdaptGrow matches AdaGrad's iteration count at lower per-step cost, and is correspondingly slower on the short $n = 10^5$ runs.

At $n = 10^6$, AdaGrad suits correlation-type spectra and Block-SVRG \textsc{AdaptGrow} suits TPDM-type spectra. At $n = 10^5$ every run is short, so full-batch Piecewise or AdaGrad is fastest on both. Phase~2 shows SymNMF remains practical when $S$ exceeds single-GPU memory.

\section{Clustering Alternatives}\label{clustering}

SymNMF's factorization $S \approx HH^\top$ is already a soft clustering: the rows of $H$ give each point's membership across the $k$ groups, and $\arg\max_j H_{ij}$ recovers a hard label.

When only hard labels are needed, a direct clustering algorithm can bypass the factorization. Both inputs are unit-diagonal (Section~\ref{estimation}): a matrix of unit vectors for the TPDM, and likewise for absolute correlation up to the elementwise $|\cdot|$, which can mildly break PSD at scale (Section~\ref{snmf:spectral}). Theorems~\ref{thm:snmf_kmeans}--\ref{thm:snmf_spkmeans} link orthogonality-constrained SymNMF on that matrix to classical and, approximately, spherical K-means on the latent unit vectors that produce it. Given only~$S$, we run spherical K-means on the rows of~$S$ as the practical kernel-style baseline; the unconstrained Frobenius SymNMF we optimize further drops orthogonality. We develop both equivalences below, then compare efficiency in Section~\ref{clustering:results}.

\subsection{Theoretical Equivalences}\label{clustering:theory}

Here $K$ denotes the number of clusters and $k$ the summation index.

\subsubsection{K-means as constrained SymNMF}

Given \( n \) observations \( \bm{x}_1, \dots, \bm{x}_n \in \mathbb{R}^m \), K-means~\cite{macqueen1967} partitions them into \( K \) clusters \( C_1,\dots, C_K \) by minimizing the within-cluster variance:
\[
J_K = \sum_{k=1}^{K}\sum_{i \in C_k} \lVert \bm{x}_i - \bm{m}_k \rVert^2,
\]
where \( \bm{m}_k = \frac{1}{n_k}\sum_{i \in C_k} \bm{x}_i \) is the centroid with cardinality \( n_k \).
The connection to SymNMF was formalized by \cite{ding2005} and \cite{kuang2015symnmf}:

\begin{theorem}\label{thm:snmf_kmeans}
Let \( \bm{X} = [\bm{x}_1 \cdots \bm{x}_n] \in \mathbb{R}^{m \times n} \) be the data matrix, \( \bm{M} = [\bm{m}_1 \cdots \bm{m}_K] \in \mathbb{R}^{m \times K} \) the centroid matrix, and \( \bm{B} \in \{0,1\}^{n \times K} \) the binary assignment matrix with exactly one non-zero entry per row. Define \( \bm{D} \coloneqq \bm{B}^\top \bm{B} = \diag(n_1, \dotsc, n_K) \). Then:
\[
\bm{B} = \argmin{\bm{B},\bm{M}} J_K \quad \text{where } \bm{M} = \bm{X}\bm{B}\bm{D}^{-1}
\]
\[
\bm{L} = \argmin{\substack{\bm{L}^\top \bm{L} = \bm{I}_K \\ \bm{L} \geq 0}} \lVert \bm{X}^\top \bm{X} - \bm{L}\bm{L}^\top \rVert_F^2
\]
on the set of assignment-induced matrices $\bm{L} = \bm{B}\bm{D}^{-1/2}$ ($\bm{L}$ has exactly one nonzero entry per row).
\end{theorem}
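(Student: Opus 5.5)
The plan is to reduce both sides to a single trace expression in $\bm{B}$ and check that they rank the admissible assignments identically. The underlying idea is the classical one of \cite{ding2005}. Fix a binary assignment $\bm{B}$ with exactly one nonzero per row, and assume every cluster is nonempty so that $\bm{D}$ is invertible.

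\textbf{Step 1: eliminate $\bm{M}$ from $J_K$.} For fixed $\bm{B}$, $J_K$ is a least-squares problem in $\bm{M}$, since $J_K=\lVert \bm{X}-\bm{M}\bm{B}^\top\rVert_F^2$. Setting the gradient to zero gives $\bm{M}\bm{B}^\top\bm{B}=\bm{X}\bm{B}$, hence $\bm{M}=\bm{X}\bm{B}\bm{D}^{-1}$, which is the centroid formula in the statement.

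Substituting back, $\bm{B}\bm{D}^{-1}\bm{B}^\top=\bm{L}\bm{L}^\top$ is the orthogonal projector $P$ onto the span of the cluster indicators. Therefore
\[
J_K=\lVert \bm{X}(\bm{I}-P)\rVert_F^2=\operatorname{tr}(\bm{X}^\top\bm{X})-\operatorname{tr}(\bm{L}^\top\bm{X}^\top\bm{X}\bm{L}).
\]
The first term is constant, so minimizing $J_K$ over assignments is equivalent to maximizing $\operatorname{tr}(\bm{L}^\top \bm{X}^\top\bm{X}\bm{L})$ with $\bm{L}=\bm{B}\bm{D}^{-1/2}$.

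\textbf{Step 2: check that the induced $\bm{L}$ is feasible.} Verify $\bm{L}^\top\bm{L}=\bm{D}^{-1/2}\bm{B}^\top\bm{B}\bm{D}^{-1/2}=\bm{I}_K$, and note $\bm{L}\ge0$. Conversely, every nonnegative orthonormal $\bm{L}$ with one nonzero entry per row arises this way: orthonormal columns with disjoint supports force column $k$ to be constant, equal to $n_k^{-1/2}$, on its support. So the feasible set is exactly the set of assignment-induced matrices.

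\textbf{Step 3: expand the SymNMF objective on this set.} Use the trace identity \eqref{eq:trace_loss} with $S=\bm{X}^\top\bm{X}$:
\[
\lVert S-\bm{L}\bm{L}^\top\rVert_F^2=\lVert S\rVert_F^2-2\operatorname{tr}(\bm{L}^\top S\bm{L})+\lVert \bm{L}^\top\bm{L}\rVert_F^2 .
\]
Orthonormality gives $\lVert \bm{L}^\top\bm{L}\rVert_F^2=\lVert\bm{I}_K\rVert_F^2=K$, a constant. Minimizing the SymNMF objective is therefore also equivalent to maximizing $\operatorname{tr}(\bm{L}^\top S\bm{L})$. Since Steps 1 and 3 produce the same trace objective over the same feasible set (Step 2), the optimal $\bm{B}$ and the optimal $\bm{L}=\bm{B}\bm{D}^{-1/2}$ correspond, which proves the theorem.

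\textbf{Main obstacle.} The algebra is routine; the delicate part is the domain bookkeeping. The statement's $\bm{B}$ and $\bm{L}$ minimizers must be taken over matched sets, and empty clusters must be excluded or handled so that $\bm{D}^{-1}$ exists. I would state explicitly that the equivalence holds on assignment-induced $\bm{L}$, as the theorem does, and not over all nonnegative orthonormal $\bm{L}$. This restriction is harmless, however: by the disjoint-support argument in Step 2, every nonnegative orthonormal $\bm{L}$ with full-rank columns has disjoint column supports, so the only remaining gap is rows of $\bm{L}$ that are entirely zero.
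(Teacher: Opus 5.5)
Your main line is the paper's own proof. You eliminate $\bm{M}$, write $J_K=\lVert\bm{X}(\bm{I}_n-\bm{B}\bm{D}^{-1}\bm{B}^\top)\rVert_F^2$, and use idempotence of the projector to reach $\operatorname{tr}(\bm{X}^\top\bm{X})-\operatorname{tr}(\bm{L}^\top\bm{X}^\top\bm{X}\bm{L})$ with $\bm{L}=\bm{B}\bm{D}^{-1/2}$ and $\bm{L}^\top\bm{L}=\bm{I}_K$. Your Step~1 derivation of the centroid and your Step~3 expansion (with $\lVert\bm{L}^\top\bm{L}\rVert_F^2=K$ constant) make explicit what the paper takes as a definition or leaves implicit in its final ``$\iff$''. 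That part is correct.

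The converse in Step~2 is false, and so is the ``harmless'' remark that rests on it. Orthonormality with disjoint supports does not force a column to be constant on its support. For $n=3$, $K=2$, take $\bm{L}$ with columns $(0.6,0.8,0)^\top$ and $(0,0,1)^\top$. It is nonnegative, satisfies $\bm{L}^\top\bm{L}=\bm{I}_2$, and has exactly one nonzero per row, yet it is not $\bm{B}\bm{D}^{-1/2}$ for any assignment.

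The discrepancy matters. Fix a support pattern $C_1,\dots,C_K$ and set $S=\bm{X}^\top\bm{X}$. Then $\operatorname{tr}(\bm{L}^\top S\bm{L})=\sum_k\bm{\ell}_k^\top S_{C_kC_k}\bm{\ell}_k$, and the best nonnegative unit $\bm{\ell}_k$ on $C_k$ is in general not $\bm{1}_{C_k}/\sqrt{n_k}$; for entrywise nonnegative $S$ it is the Perron vector of the block. So the relaxed problem scores each partition by a quantity generally larger than $\sum_k\bm{1}^\top S_{C_kC_k}\bm{1}/n_k=\operatorname{tr}(S)-J_K$, and its minimizer need not be assignment-induced. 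This is exactly why the theorem restricts to $\bm{L}=\bm{B}\bm{D}^{-1/2}$, and why the paper notes that the continuous relaxation is a superset of the discrete family.

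Since the statement already fixes that domain, you never need the converse. The map $\bm{B}\mapsto\bm{B}\bm{D}^{-1/2}$ is a bijection onto it (recover $\bm{B}$ from the support), and Steps~1 and~3 then complete the proof. Drop the converse, and drop the claim that the only remaining gap is zero rows: non-constant columns are a second, essential gap.
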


The equivalence follows from the substitution \( \bm{L} = \bm{B}\bm{D}^{-1/2} \), which satisfies both non-negativity and orthogonality \(\bm{L}^\top \bm{L} = \bm{I}_K\); the proof is in the appendix. The continuous relaxation, allowing arbitrary $\bm{L} \ge 0$ with $\bm{L}^\top\bm{L} = \bm{I}_K$, is a superset of this discrete family and links to SymNMF \cite{ding2005,kuang2015symnmf}. Constrained SymNMF thus recovers the K-means solution through the structure of \(\bm{L}\): K-means is SymNMF with an added orthogonality constraint.

\subsubsection{Spherical K-means as constrained SymNMF}

Spherical K-means \cite{dhillon2001,hornik2012} optimizes cosine similarity rather than Euclidean distance, the natural objective for data on the unit sphere such as our standardized inputs (Section~\ref{estimation}). It partitions the data into \( K \) clusters by minimizing angular dissimilarity:
\[
S_K = \sum_{k=1}^{K}\sum_{i \in C_k} \left(1 - \langle \tilde{\bm{x}}_i, \tilde{\bm{c}}_k \rangle\right),
\]
where \( \tilde{\bm{x}}_i = \bm{x}_i / \lVert \bm{x}_i \rVert \) and \( \tilde{\bm{c}}_k = \bm{c}_k / \lVert \bm{c}_k \rVert \) are the $\ell_2$-normalized observations and centroids \citep{hornik2012}.

\begin{theorem}\label{thm:snmf_spkmeans}
Let \( \tilde{\bm{X}} = [\tilde{\bm{x}}_1 \cdots \tilde{\bm{x}}_n] \in \mathbb{R}^{m \times n} \) be the normalized data matrix with \( \lVert \tilde{\bm{x}}_i \rVert = 1 \), and \( \tilde{\bm{B}} \in \{0,1\}^{n \times K} \) the binary assignment matrix with \( \tilde{\bm{D}} \coloneqq \tilde{\bm{B}}^\top \tilde{\bm{B}} = \diag(n_1, \dotsc, n_K) \). Define \( \tilde{\bm{L}} = \tilde{\bm{B}}\tilde{\bm{D}}^{-1/2} \) and let \( \bm{s}_k = \sum_{i \in C_k} \tilde{\bm{x}}_i \). Then:
\begin{enumerate}
\item Minimizing \( S_K \) is equivalent to maximizing \( \sum_{k=1}^{K} \lVert \bm{s}_k \rVert \).
\item Minimizing the orthogonally-constrained SymNMF objective \( \lVert \tilde{\bm{X}}^\top \tilde{\bm{X}} - \tilde{\bm{L}}\tilde{\bm{L}}^\top \rVert_F^2 \) is equivalent to maximizing \( \sum_{k=1}^{K} \lVert \bm{s}_k \rVert^2 / n_k \).
\end{enumerate}
\end{theorem}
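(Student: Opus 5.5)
Both parts reduce to direct algebra on a fixed partition $C_1,\dots,C_K$, since $\tilde{\bm{B}}$, $\tilde{\bm{D}}$ and $\tilde{\bm{L}}$ are all determined by the partition. I would therefore fix the partition, express each objective in closed form as a function of the cluster sums $\bm{s}_k$ and sizes $n_k$, and drop every term that does not depend on the partition.

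For part 1, the centroid minimizing $S_K$ within cluster $k$ must be handled first. For fixed $C_k$, $\sum_{i\in C_k}(1-\langle\tilde{\bm{x}}_i,\tilde{\bm{c}}_k\rangle) = n_k - \langle \bm{s}_k,\tilde{\bm{c}}_k\rangle$. Over unit vectors $\tilde{\bm{c}}_k$, Cauchy--Schwarz maximizes this inner product at $\tilde{\bm{c}}_k = \bm{s}_k/\lVert\bm{s}_k\rVert$, giving the value $\lVert\bm{s}_k\rVert$. If $\bm{s}_k=0$, any unit vector gives $0$, which matches. Summing over $k$ yields $\min_{\tilde{\bm c}} S_K = n - \sum_k\lVert\bm{s}_k\rVert$. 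Since $n$ is constant, minimizing $S_K$ over partitions and centroids is equivalent to maximizing $\sum_k\lVert\bm{s}_k\rVert$. The one point to state explicitly is that the normalized centroid $\bm{c}_k/\lVert\bm{c}_k\rVert$ is chosen optimally, exactly as the Euclidean centroid in Theorem~\ref{thm:snmf_kmeans} is the optimizer.

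For part 2, I would expand with the trace identity~\eqref{eq:trace_loss} applied to $S=\tilde{\bm{X}}^\top\tilde{\bm{X}}$ and $H=\tilde{\bm{L}}$:
\[
\lVert \tilde{\bm{X}}^\top\tilde{\bm{X}} - \tilde{\bm{L}}\tilde{\bm{L}}^\top\rVert_F^2 = \lVert\tilde{\bm{X}}^\top\tilde{\bm{X}}\rVert_F^2 - 2\operatorname{tr}(\tilde{\bm{L}}^\top\tilde{\bm{X}}^\top\tilde{\bm{X}}\tilde{\bm{L}}) + \lVert\tilde{\bm{L}}^\top\tilde{\bm{L}}\rVert_F^2 .
\]
The first term does not depend on the partition. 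Because $\tilde{\bm{L}}^\top\tilde{\bm{L}}=\tilde{\bm{D}}^{-1/2}\tilde{\bm{B}}^\top\tilde{\bm{B}}\tilde{\bm{D}}^{-1/2}=\bm{I}_K$, the third term is the constant $K$. For the middle term, note that $\tilde{\bm{X}}\tilde{\bm{B}}$ has $k$-th column $\bm{s}_k$, so the $k$-th column of $\tilde{\bm{X}}\tilde{\bm{L}}$ is $\bm{s}_k/\sqrt{n_k}$. The trace is then $\lVert\tilde{\bm{X}}\tilde{\bm{L}}\rVert_F^2=\sum_k\lVert\bm{s}_k\rVert^2/n_k$. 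Minimizing the objective is therefore equivalent to maximizing this sum. This is the same computation as in the proof of Theorem~\ref{thm:snmf_kmeans}, with $\bm{X}$ replaced by $\tilde{\bm{X}}$.

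There is no substantial obstacle. The care needed is in bookkeeping. First, empty clusters must be excluded (or $n_k\ge1$ assumed) so that $\tilde{\bm{D}}^{-1/2}$ is defined. Second, the statement only asserts the two equivalences separately, not that they agree: part 1 has weights $\lVert\bm{s}_k\rVert$ and part 2 has $\lVert\bm{s}_k\rVert^2/n_k$, which is the "approximately" in the surrounding text. I would add a remark that $\lVert\bm{s}_k\rVert^2/n_k=\lVert\bm{s}_k\rVert\cdot(\lVert\bm{s}_k\rVert/n_k)$, where $\lVert\bm{s}_k\rVert/n_k\le1$ is the mean resultant length. So the two objectives coincide when clusters are tight and otherwise differ by this per-cluster weight.
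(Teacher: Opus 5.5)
Your proposal is correct and follows the paper's proof: part 1 reduces $S_K$ to $n - \sum_k \lVert \bm{s}_k \rVert$ with the normalized-sum centroid $\bm{s}_k/\lVert \bm{s}_k \rVert$, and part 2 uses $\tilde{\bm{L}}^\top \tilde{\bm{L}} = \bm{I}_K$ to turn the Frobenius objective into maximizing $\operatorname{Tr}(\tilde{\bm{L}}^\top \tilde{\bm{X}}^\top \tilde{\bm{X}} \tilde{\bm{L}}) = \sum_k \lVert \bm{s}_k \rVert^2/n_k$, as in the proof of Theorem~\ref{thm:snmf_kmeans}. You differ only in making steps explicit: you justify the centroid as the Cauchy--Schwarz optimizer, which the paper simply takes by definition. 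You also spell out the constant term $\lVert \tilde{\bm{L}}^\top \tilde{\bm{L}} \rVert_F^2 = K$ and the column-wise trace computation, both of which the paper leaves implicit.
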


The two objectives differ only by cardinality weighting: spherical K-means maximizes \( \sum_k \lVert \bm{s}_k \rVert \), while constrained SymNMF maximizes \( \sum_k \lVert \bm{s}_k \rVert^2 / n_k \). For balanced partitions (\( n_k \approx n/K \)) the weights are nearly common and the objectives align, so SymNMF on \( \tilde{\bm{X}}^\top \tilde{\bm{X}} \) acts as a continuous relaxation of spherical K-means that tends to recover the same structure \citep{ding2005,kuang2015symnmf}; the alignment loosens for strongly unbalanced partitions.

The distinction from Theorem~\ref{thm:snmf_kmeans} is that the spherical centroid is the normalized sum \( \tilde{\bm{c}}_k = \bm{s}_k / \lVert \bm{s}_k \rVert \), not the mean \( \bm{s}_k / n_k \). That blocks the idempotent projection argument that gives an exact algebraic equivalence in the Euclidean case. Both objectives penalize dispersed clusters and reward tightly aligned groups, so SymNMF on the matrix remains a surrogate for angular clustering.

Both inputs are unit-diagonal, so spherical K-means is the matched discrete objective; classical K-means would suit an un-standardized input such as raw covariance. Given only~$S$, we run it on the rows of~$S$; unconstrained Frobenius SymNMF further drops orthogonality.

\subsection{Computational Comparison}\label{clustering:results}
We benchmark spherical K-means (Lloyd) on the rows of $S$ against the Phase~2 SymNMF solvers under a matched setup: the same PyTorch + NCCL row-sharding and the same hardware allocation as Section~\ref{results:phase2}. Both SymNMF and K-means are reported in fp32; SymNMF residuals are additionally unchanged under TF32 (Section~\ref{snmf:compute}), whereas K-means on TPDM is sensitive to that setting, so we keep TF32 off for the K-means runs.

The convergence criteria differ, so iteration counts are not comparable. SymNMF must clear the three-criterion rule~\eqref{eq:conv_and}; K-means stops on objective stagnation alone, a relative drop in $\sum_i (1 - \cos)$ below $10^{-4}$, as soon as the Lloyd assignments stabilize, and all runs converged. K-means thus certifies a weaker condition on a simpler problem, and part of its speed advantage is the lighter stopping rule. We compare wall-clock and the silhouette of the resulting partition as a label-free quality summary.

\begin{table}[t]
\centering\footnotesize
\caption{Spherical K-means (Lloyd on the rows of $S$) across scales and types. Hardware matches the SymNMF benchmarks: single GPU at $n \le 10^4$, four GPUs at $n = 10^5$, 16 nodes (64 GPUs) at $n = 10^6$. True fp32 with TF32 off; means over seeds $\{7, 42, 99\}$, all converged. Silhouette is label-free. At $n = 10^6$ TPDM uses $k = 400$ (Section~\ref{snmf:spectral}).}
\label{tab:kmeans}
\smallskip
\begin{tabular}{l r r r r r}
\toprule
Type & $n$ & $k$ & Silhouette & iters & wall (s) \\
\midrule
\multirow{5}{*}{CORR}
 & 100 & 10 & 0.78 & 5 & 0.015 \\
 & 1\,000 & 31 & 0.71 & 6 & 0.040 \\
 & 10\,000 & 100 & 0.57 & 12 & 0.146 \\
 & 100\,000 & 316 & 0.48 & 12 & 1.93 \\
 & 1\,000\,000 & 1\,000 & 0.32 & 17 & 23.7 \\
\midrule
\multirow{5}{*}{TPDM}
 & 100 & 10 & 0.75 & 3 & 0.13 \\
 & 1\,000 & 31 & 0.67 & 9 & 0.044 \\
 & 10\,000 & 100 & 0.60 & 12 & 0.145 \\
 & 100\,000 & 316 & 0.37 & 17 & 2.20 \\
 & 1\,000\,000 & 400 & ${\approx}0$ & 4 & 6.9 \\
\bottomrule
\end{tabular}
\end{table}

\paragraph{Cost.}
A Lloyd step and a SymNMF gradient step share the same $O(n^2 k)$ product of $S$ with an $n \times k$ factor, so the wall-clock gap is iteration count: K-means finishes in at most $17$ iterations, versus tens to hundreds for the Phase~2 leaders (Table~\ref{tab:phase2}). Where both produce a meaningful partition, K-means is several times faster at $n \ge 10^5$. When only hard labels are needed, that is enough; the soft embedding $H$ still requires the factorization.

\paragraph{A spectral limitation of the hard baseline.}
Table~\ref{tab:kmeans} also shows a limitation of spherical K-means. On correlation the silhouette degrades gracefully with scale; on TPDM it tracks correlation through moderate~$n$ and then collapses at $n = 10^6$, where Lloyd stops almost immediately on a meaningless partition.

The cause is the effective-rank collapse of Table~\ref{tab:spectral} ($r_{\mathrm{eff}} = \operatorname{tr}(S)/\lambda_1 \to 1.1$ at $n = 10^6$). Decompose the symmetric PSD input as $S = \sum_j \lambda_j \bm u_j \bm u_j^\top$ with $\lambda_1 \ge \lambda_2 \ge \cdots \ge 0$; its $i$-th row is $\bm s_i = S\bm e_i = \sum_j \lambda_j u_{ji}\,\bm u_j$. Writing $\theta_i$ for the angle between $\bm s_i$ and the leading eigenvector $\bm u_1$, and using $\bm u_1^\top \bm s_i = \lambda_1 u_{1i}$ with $\sum_i u_{1i}^2 = 1$,
\begin{equation}
\frac{\sum_i \lVert \bm s_i\rVert^2 \sin^2\theta_i}{\sum_i \lVert \bm s_i\rVert^2}
= \frac{\lVert S\rVert_F^2 - \lambda_1^2}{\lVert S\rVert_F^2}
= \frac{\sum_{j\ge2}\lambda_j^2}{\sum_j \lambda_j^2}
\;\le\; (r_{\mathrm{eff}} - 1)^2,
\label{eq:angular_collapse}
\end{equation}
where the bound follows from $\sum_{j\ge2}\lambda_j^2 \le (\operatorname{tr}S - \lambda_1)^2 = \lambda_1^2 (r_{\mathrm{eff}}-1)^2$ and $\lVert S\rVert_F^2 \ge \lambda_1^2$. At $n = 10^6$ this caps the norm-weighted root-mean-square of $\sin\theta_i$ at $r_{\mathrm{eff}} - 1 = 0.1$.

Moreover the TPDM is entrywise positive and irreducible, so Perron--Frobenius gives a unique strictly positive $\bm u_1$. Every row then has a positive projection onto $\bm u_1$, and after unit normalization all rows concentrate on that same direction. The spherical K-means objective is near-zero for every partition, so the assignment is non-identifiable and the silhouette vanishes, for any $k$ and any optimizer. The bound~\eqref{eq:angular_collapse} is monotone in $r_{\mathrm{eff}} - 1$, matching Table~\ref{tab:spectral}: hard angular clustering fails when a single common factor dominates.

SymNMF avoids it by fitting magnitudes rather than angles, reconstructing the same matrix to the precision floor by $HH^\top$ ($E_t \to 0$; Section~\ref{results:phase2}). The soft factorization stays necessary both for the embedding and in this near-rank-1 regime.

\section{Related Work}\label{related}

\paragraph{Extreme value theory and dependence modeling.}
The TPDM of \cite{Cooley2019} admits exact (PCA-like) factorizations for regularly varying vectors; we use SymNMF for an approximate factorization at scales where those methods are impractical. Multivariate regular variation follows \cite{resnick2010heavy,resnick19872014extreme}, with extreme-value background in \cite{Coles2001,de2010extreme}. Tail dependence in financial risk is motivated by regulation and portfolio construction \cite{BCBS2011,DeLuca2011Tail,Lohre2020Hierarchical}; Pareto(2) margins are standard \cite{Hult2007,Cheng2024}.

\paragraph{NMF, SymNMF, and distributed solvers.}
NMF was popularized by \cite{lee1999learning,lee2001algorithms}; SymNMF and its link to graph clustering by \cite{kuang2012symmetric,kuang2015symnmf}, with identifiability in \cite{gillis2020nmf}. Lines we benchmark include dampened MU \cite{he2011symmetric}, ANLS/HALS \cite{kim2014algorithms,kimpark2008}, PGNCG and the CPU/MPI library PLANC \cite{eswar2020pgncg,eswar2021planc}, ADMM in non-convex and SymNMF settings \cite{boyd2011distributed,deng2012nonconvex,lu2017stochastic,hong2016convergence}, and deep unfolding \cite{li2022symncf}. MPI-FAUN \cite{kannan2018mpifaun} is the other mature CPU-cluster NMF stack. We target single- and multi-GPU settings and compare these families under one convergence criterion, rather than a wall-clock head-to-head against MPI on CPU.

\paragraph{Clustering via SymNMF.}
Orthogonality-constrained SymNMF is equivalent to K-means on the latent factors \cite{ding2005,kuang2015symnmf}; spherical K-means replaces Euclidean distance by cosine similarity \cite{dhillon2001,hornik2012}. We use Lloyd on the rows of~$S$ as the discrete baseline when only hard labels are needed (Section~\ref{clustering}), and keep unconstrained Frobenius SymNMF when a soft embedding is required or angular structure collapses.

\paragraph{Convergence criteria.}
Projected-gradient stopping for constrained NMF follows \cite{lin2007projected}. Our three-criterion AND rule (Section~\ref{snmf:convergence}) adds a loss gate and per-element KKT normalization.

\paragraph{Constraint handling.}
Smooth bijections for $H \ge 0$ connect to mirror descent and multiplicative weights \cite{nemirovsky1983problem,amid2020}; common maps are $e^\Theta$, $\Theta^2$, and softplus \cite{subramani2024rethinking}. The interaction of the bijection with optimizer memory length (Section~\ref{results:logspace}) appears new in the NMF setting.

\paragraph{Adaptive gradients and stochastic SymNMF.}
AdaGrad's growing accumulator \cite{duchi2011adagrad} saturates on long runs; RMSprop and Adam forget via EMAs \cite{hinton2012rmsprop,kingma2014adam,mukkamala2017variants}. The decaying step size, not the direction, often drives AdaGrad's behavior \cite{agarwal2020grafting}. Scheduled remedies include accumulator decay \cite{deeprec_adagraddecay}, cosine learning-rate restarts \cite{loshchilov2017sgdr}, and non-monotone accumulators \cite{defazio2022gradagrad}. Piecewise AdaGrad instead resets on objective stagnation, recalibrates the accumulator per entry, and leaves the learning rate untouched (Section~\ref{snmf:perf}).

For cheaper steps we adapt SVRG \cite{johnson2013svrg} to row and block entry sampling under AdaGrad. Randomized SymNMF of \cite{hayashi2024randomized} samples inside constrained least-squares subproblems; we sample the gradient itself and feed it to an AdaGrad accumulator. Growing-batch follows the norm test and sample-complexity bounds of \cite{friedlander2012hybrid,byrd2012sample}; we adapt them to non-convex SymNMF, where block sampling creates an accumulator heterogeneity that SVRG must repair (Section~\ref{snmf:perf}).

\section{Conclusions}\label{conclusions}

Removing $n \times n$ intermediates makes dense SymNMF practical through $n = 10^6$. Five AdaGrad-family methods converge at that scale; ADMM reaches a lower reconstruction floor at $n = 10^5$ than the projected first-order methods, but is not among the fastest there and was not scaled further.

A central finding is that diagonal AdaGrad, standard in deep learning but not previously studied for dense SymNMF, is the method that makes this problem tractable at scale: we apply it and its adaptive relatives through $n = 10^6$ and find that they alone occupy the reliability--efficiency frontier. AdaGrad is the simplest member of that family and the default when runs stay short. Piecewise AdaGrad is fastest at $n = 10^5$, but its accumulator resets become fragile on long smooth descents. When iteration counts grow, Block-SVRG \textsc{AdaptGrow}'s cheaper sub-block step pays off once it matches full-batch iterations; because it grows the batch from the same stagnation signal, it can start stochastic and fall back to full-batch AdaGrad without choosing a regime in advance. Projection is preferred for long-memory adaptive methods at scale; softplus helps short-memory and some second-order methods only at small~$n$.

When only hard labels are needed and angular structure is present, spherical K-means is cheaper than SymNMF-then-argmax. As the effective rank falls toward one, that baseline fails: the same near-rank-1 regime that breaks angular clustering makes AdaptGrow's random sub-block a low-variance, cheap gradient estimate, while the soft factorization still fits.

Natural next steps are streaming SymNMF for rolling windows and a profiled multi-node scaling study. Open theoretical questions remain: when the spectral gap selects the faster solver, when low effective rank makes sub-sampling faithful, and how the learning rate should grow with~$n$.

\section*{Acknowledgements}
We thank Siddharth Samsi and Saleh Ashkboos for help in reviewing the paper, and Ioana Boier for helpful advice and broader guidance throughout this work.

\clearpage
\renewcommand{\thesection}{\Alph{section}}
\section*{Appendix}\label{appendix}
\setcounter{section}{1}
\renewcommand{\thesubsection}{\Alph{subsection}}

\subsection{Proofs}\label{proofs}
{
\setlength{\abovedisplayskip}{0.5\baselineskip}
\setlength{\belowdisplayskip}{0.5\baselineskip}
\setlength{\abovedisplayshortskip}{0.25\baselineskip}
\setlength{\belowdisplayshortskip}{0.25\baselineskip}
\setlength{\topsep}{0.35\baselineskip}
\setlength{\partopsep}{0pt}

\paragraph{Regular variation of the factor model.}
The following results underlie the TPDM construction of Section~\ref{tpdm}; \(\Vert\cdot\Vert_2\) is the Euclidean norm, matching the polar decomposition of Section~\ref{regvar}.

\begin{proposition}\label{prop:sum} (Theorem 7.4 in \cite{resnick2010heavy}) Let \(\bm{X}_1, \bm{X}_2 \in RV_{+}^n(\alpha)\) be independent random vectors with a \emph{common} normalizing sequence \(\{b_N\}\) such that \(N\mathbb{P}(b_N^{-1} \bm{X}_1 \in \cdot) \xrightarrow{v} \nu_{\bm{X}_1}(\cdot)\) and \(N\mathbb{P}(b_N^{-1} \bm{X}_2 \in \cdot) \xrightarrow{v} \nu_{\bm{X}_2}(\cdot)\). Then \(\bm{X}_1 + \bm{X}_2 \in RV_{+}^n(\alpha)\) and
\[
N\mathbb{P}(b_N^{-1} (\bm{X}_1 + \bm{X}_2 ) \in \cdot) \xrightarrow{v} \nu_{\bm{X}_1}(\cdot) + \nu_{\bm{X}_2}(\cdot) \text{.}
\]
\end{proposition}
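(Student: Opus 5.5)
The strategy is to reduce vague convergence on $E=[0,\infty]^n\setminus\{0\}$ to convergence of $N\mathbb{P}(b_N^{-1}(\bm{X}_1+\bm{X}_2)\in A)$ on a convergence-determining class of relatively compact sets $A$ bounded away from the origin whose boundaries are null for $\nu_{\bm{X}_1}+\nu_{\bm{X}_2}$. The underlying intuition is the ``one big jump'' principle: because $\bm{X}_1$ and $\bm{X}_2$ are independent, a large value of the sum occurs, to first order, only when exactly one summand is large while the other stays of order $o(b_N)$. The simultaneous-extremes contribution is negligible since $N\mathbb{P}(\|\bm{X}_1\|>\epsilon b_N,\|\bm{X}_2\|>\epsilon b_N)=N\,\mathbb{P}(\|\bm{X}_1\|>\epsilon b_N)\,\mathbb{P}(\|\bm{X}_2\|>\epsilon b_N)=O(1)\cdot O(N^{-1})\to 0$. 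This step uses the common normalizing sequence: each marginal tail is $O(1/N)$ at level $\epsilon b_N$.

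The key steps, in order, are as follows.
\begin{enumerate}
\item Fix $\epsilon>0$ and a set $A$ contained in $\{\bm{x}:\|\bm{x}\|>\delta\}$, with $\nu$-null boundary.
\item Split the event $\{b_N^{-1}(\bm{X}_1+\bm{X}_2)\in A\}$ according to whether $\|\bm{X}_1\|\le\epsilon b_N$ or $\|\bm{X}_2\|\le\epsilon b_N$, with a remainder on which both norms exceed $\epsilon b_N$. The remainder vanishes by the product bound above.
\item On $\{\|\bm{X}_2\|\le\epsilon b_N\}$, the vector $b_N^{-1}(\bm{X}_1+\bm{X}_2)$ lies within $\epsilon$ of $b_N^{-1}\bm{X}_1$. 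Hence the probability is sandwiched between $N\mathbb{P}(b_N^{-1}\bm{X}_1\in A^{-\epsilon})$ and $N\mathbb{P}(b_N^{-1}\bm{X}_1\in A^{\epsilon})$, where $A^{\pm\epsilon}$ denote the $\epsilon$-interior and the $\epsilon$-enlargement of $A$. These bounds use $\mathbb{P}(\|\bm{X}_2\|\le\epsilon b_N)\to1$ and independence.
\item By symmetry the same sandwich holds with the roles of $\bm{X}_1$ and $\bm{X}_2$ exchanged.
\item Take $\limsup$ and $\liminf$ as $N\to\infty$ using vague convergence of each summand, giving bounds $\nu_{\bm{X}_1}(A^{\pm\epsilon})+\nu_{\bm{X}_2}(A^{\pm\epsilon})$. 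Then let $\epsilon\downarrow0$; the null-boundary assumption gives the limit $\nu_{\bm{X}_1}(A)+\nu_{\bm{X}_2}(A)$.
\end{enumerate}

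Since $\nu_{\bm{X}_1}+\nu_{\bm{X}_2}$ is nonzero and inherits the homogeneity $\nu(aC)=a^{-\alpha}\nu(C)$, it follows that $\bm{X}_1+\bm{X}_2\in RV_+^n(\alpha)$. The $\epsilon$-perturbation arguments require $A$ to stay away from the origin, which holds because $\delta>\epsilon$ for small $\epsilon$. Non-negativity of the vectors is not needed for these estimates, but it keeps the sum inside $\mathbb{R}_+^n$.

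The main obstacle is the overlap accounting in steps 2--4. The two ``one large'' events overlap exactly on the both-large set, and on the both-small set the sum cannot reach $A$ once $2\epsilon<\delta$. I would carry out the bookkeeping with the inclusion--exclusion $\mathbf{1}_{\{\|\bm{X}_1\|\le\epsilon b_N\}}+\mathbf{1}_{\{\|\bm{X}_2\|\le\epsilon b_N\}}-\mathbf{1}_{\{\text{both small}\}}$ plus the both-large indicator. Note that $\{\|\bm{X}_2\|\le\epsilon b_N\}$ contains the both-small event, so these two pieces are not disjoint, and the inclusion--exclusion handles this overlap.

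Alternatively, I could simply invoke Resnick's Theorem 7.4, which the statement cites, after checking that the paper's normalization matches Resnick's.
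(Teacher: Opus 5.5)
Your argument is correct, and it takes a different route from the paper only because the paper gives no argument of its own for Proposition~\ref{prop:sum}. It imports the result from Resnick's Theorem~7.4 (your closing fallback); the appendix proves only the scaling rule, Proposition~\ref{prop:prod}. Your $\epsilon$-sandwich is a self-contained version of the ``one big jump'' mechanism. A common textbook packaging of the same idea runs in two steps. First, independence makes $(\bm{X}_1,\bm{X}_2)$ jointly regularly varying on $[0,\infty]^{2n}\setminus\{0\}$, with limit $\nu_{\bm{X}_1}\times\delta_{\bm 0}+\delta_{\bm 0}\times\nu_{\bm{X}_2}$ concentrated on the axes. Second, this limit is pushed forward under $(\bm{x},\bm{y})\mapsto\bm{x}+\bm{y}$ by a continuous-mapping theorem for vague convergence. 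Your product bound is exactly the statement that the joint limit charges nothing off the axes, and your enlargement/interior bounds replace the mapping theorem. Your route makes explicit where independence and the common $\{b_N\}$ are used: to kill the joint-exceedance term and to get $\mathbb{P}(\lVert\bm{X}_2\rVert\le\epsilon b_N)\to1$. It also does not use non-negativity, and the convergence survives if one limit measure is zero, i.e.\ for a lighter-tailed summand. The citation buys brevity and hides the topology of the compactified cone.

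If you write it out, take enlargements and interiors in a metric on $[0,\infty]^n$, e.g.\ Euclidean distance after coordinatewise $\arctan$. This map is $1$-Lipschitz, so your closeness estimate survives and $\bigcap_{\epsilon}\overline{A^{\epsilon}}=\overline{A}$; apply the $\liminf$ bound to the interior of $A^{-\epsilon}$. The overlap bookkeeping is also easier than you expect. Since $A^{-\epsilon}\subset A\subset\{\lVert\bm{x}\rVert>\delta\}$, the event $b_N^{-1}\bm{X}_1\in A^{-\epsilon}$ already forces $\lVert\bm{X}_1\rVert>\epsilon b_N$. So a disjoint partition suffices, with no inclusion--exclusion: ``$\bm{X}_1$ large, $\bm{X}_2$ small'', its mirror image, ``both small'' (disjoint from the target event once $2\epsilon<\delta$), and ``both large'' (negligible).
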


\begin{proposition}\label{prop:prod} (See e.g.\ Proposition~5.3 in \cite{resnick2010heavy}) Let \(\bm{X} \in RV_{+}^n(\alpha)\) such that \(N\mathbb{P}(b_N^{-1} \bm{X} \in \cdot) \xrightarrow{v} \nu_{\bm{X}}(\cdot)\) in \(M_{+}\left( [0,\infty]^n \backslash \{0\} \right)\). Then for \(a \in \mathbb{R}_{+}\),
\[
N\mathbb{P}(b_N^{-1} (a \bm{X}) \in \cdot) \xrightarrow{v} a^{\alpha} \nu_{\bm{X}}(\cdot) \text{.}
\]
\end{proposition}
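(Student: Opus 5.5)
The plan is to reduce the claim to a one-dimensional change of variables in the definition of vague convergence, using the scaling property of the limit measure recalled in the polar decomposition paragraph. Since $\nu_{\bm{X}}(cC) = c^{-\alpha}\nu_{\bm{X}}(C)$, the natural route is to write the event $\{b_N^{-1}(a\bm{X}) \in C\}$ as $\{b_N^{-1}\bm{X} \in a^{-1}C\}$. The case $a=0$ is degenerate: $a\bm{X}=0$ is excluded from $[0,\infty]^n\setminus\{0\}$, and the limit $0\cdot\nu_{\bm{X}}$ is trivially attained. We therefore assume $a>0$ throughout.

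\textbf{Step 1 (test functions).} Vague convergence in $M_+([0,\infty]^n\setminus\{0\})$ means convergence of integrals of nonnegative continuous functions $g$ with compact support, that is, support bounded away from the origin. Fix such a $g$ and set $g_a(\bm{x}) := g(a\bm{x})$. The map $\bm{x}\mapsto a\bm{x}$ is a homeomorphism of $[0,\infty]^n\setminus\{0\}$ that preserves being bounded away from $0$. Hence $g_a$ is again an admissible test function, and
\[
N\,\mathbb{E}\,g\bigl(b_N^{-1}a\bm{X}\bigr) = N\,\mathbb{E}\,g_a\bigl(b_N^{-1}\bm{X}\bigr) \longrightarrow \int g_a\,d\nu_{\bm{X}} = \int g\,d\bigl(\nu_{\bm{X}}\circ m_a^{-1}\bigr),
\]
where $m_a(\bm{x}) = a\bm{x}$.

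\textbf{Step 2 (identify the pushforward).} It remains to show that $\nu_{\bm{X}}\circ m_a^{-1} = a^{\alpha}\nu_{\bm{X}}$. For a Borel set $C$ we have $\nu_{\bm{X}}(m_a^{-1}C) = \nu_{\bm{X}}(a^{-1}C)$. Applying the homogeneity $\nu_{\bm{X}}(cC)=c^{-\alpha}\nu_{\bm{X}}(C)$ with $c=a^{-1}$ gives $a^{\alpha}\nu_{\bm{X}}(C)$.

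Homogeneity is quoted in the paper from \cite{resnick2010heavy}. If I had to justify it, I would derive it from regular variation of $b_N$. Since $b_N = L(N)N^{1/\alpha}$, for any $t>0$ one has $b_{\lfloor tN\rfloor}/b_N \to t^{1/\alpha}$. Comparing $N\mathbb{P}(\bm{X}/b_N \in \cdot)$ along the subsequence $\lfloor tN\rfloor$ then yields $\nu_{\bm{X}}(t^{1/\alpha}C) = t^{-1}\nu_{\bm{X}}(C)$.

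\textbf{Main obstacle.} The only delicate point is this homogeneity step, specifically the interchange of the subsequence rescaling with the vague limit. To handle it I would again use test functions, together with the uniform convergence $g(\bm{x}\,b_N/b_{\lfloor tN\rfloor}) \to g(t^{-1/\alpha}\bm{x})$ on the compact support, which follows from uniform continuity of $g$. Given that the paper takes the scaling property as known, the proof reduces to Steps 1 and 2.
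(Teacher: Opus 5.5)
Your proposal is correct and follows the paper's proof. Both arguments rewrite $\{b_N^{-1}(a\bm{X})\in C\}$ as $\{b_N^{-1}\bm{X}\in a^{-1}C\}$, invoke the homogeneity $\nu_{\bm{X}}(a^{-1}C)=a^{\alpha}\nu_{\bm{X}}(C)$, and dismiss $a=0$ as trivial. Your test-function version (pushing $\nu_{\bm{X}}$ forward under $\bm{x}\mapsto a\bm{x}$) is a more careful rendering of the paper's set-wise limit, and your sketch deriving homogeneity from regular variation of $b_N$ goes beyond the paper, which takes that scaling property as given.
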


\begin{proof}[Proof of Proposition~\ref{prop:prod}]
For $a = 0$ the conclusion is trivial.
For $a > 0$, and any $C \subset \left( [0,\infty]^n \backslash \{0\} \right)$, using the scaling property of the limiting measure,
\[
N\mathbb{P}(b_N^{-1} (a \bm{X} ) \in C) =  N\mathbb{P}(b_N^{-1} \bm{X} \in a^{-1}C)  \xrightarrow{v} \nu_{\bm{X} }(a^{-1}C) = a^{\alpha}\nu_{\bm{X} }(C) \text{.}
\]
\end{proof}

\begin{corollary}\label{cor:scalar} Let \(\bm{a} \in \mathbb{R}_{+}^n\), where \(\max_{i = 1,\dots,n} a_{i} > 0\). Let \(Z\) be a regularly varying \(\alpha\) random variable with \(\{ b_N \}_{N \geq 1} \subset \mathbb{R}\) such that \( N\mathbb{P}(b_N^{-1} Z > z) \rightarrow z^{-\alpha}, z > 0 \).
Then \( \bm{a} Z \in RV_{+}^n(\alpha)\) and when normalized by \(\{ b_N \}\) has angular measure
\[
\mathcal{H}_{\bm{a} Z} (\cdot) = \Vert \bm{a} \Vert_2^{\alpha} \delta_{\bm{a} / \Vert \bm{a} \Vert_2 }(\cdot) \text{.}
\]
\end{corollary}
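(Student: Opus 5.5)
The plan is to reduce the statement to Proposition~\ref{prop:prod} applied in dimension one, then push the scalar limit measure forward along the ray spanned by $\bm{a}$. Set $\bm{u} = \bm{a}/\Vert\bm{a}\Vert_2 \in \mathbb{S}_{+}^{n-1}$ (well defined since $\bm{a} \ge 0$ and $\max_i a_i > 0$), so that $\bm{a} Z = \Vert\bm{a}\Vert_2\, Z\, \bm{u}$. Because the statement concerns a non-negative regularly varying vector, we treat $Z \ge 0$, or equivalently restrict attention to the positive tail, which is the only part contributing mass on $[0,\infty]^n\setminus\{0\}$. Then $\bm{a}Z$ lies almost surely on the closed ray $\{r\bm{u} : r \ge 0\}$. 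Its polar coordinates are $R = \Vert\bm{a}\Vert_2 Z$ and $\bm{W} = \bm{u}$ whenever $Z>0$.

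First, I will identify the limit measure. For a Borel set $C \subset [0,\infty]^n\setminus\{0\}$ bounded away from the origin, $N\mathbb{P}(b_N^{-1}\bm{a}Z \in C) = N\mathbb{P}(b_N^{-1}\Vert\bm{a}\Vert_2 Z \in T^{-1}(C))$, where $T: r \mapsto r\bm{u}$ is continuous from $(0,\infty]$ into $[0,\infty]^n\setminus\{0\}$. The set $T^{-1}(C)$ is bounded away from $0$ because $\Vert T(r)\Vert_2 = r$. By hypothesis, $N\mathbb{P}(b_N^{-1}Z \in \cdot) \xrightarrow{v} \nu_Z$ with $\nu_Z((z,\infty]) = z^{-\alpha}$, that is, $\nu_Z(dr) = \alpha r^{-\alpha-1}dr$. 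Proposition~\ref{prop:prod} in dimension one then gives the limit $\Vert\bm{a}\Vert_2^{\alpha}\nu_Z$ for $\Vert\bm{a}\Vert_2 Z$. Next, the continuous mapping theorem for vague convergence, applied via the map $T$ under which preimages of relatively compact sets remain relatively compact, gives $\nu_{\bm{a}Z} = \Vert\bm{a}\Vert_2^{\alpha}\,\nu_Z\circ T^{-1}$. This is a non-null limit measure, so $\bm{a}Z \in RV_+^n(\alpha)$ with the same normalizing sequence $\{b_N\}$.

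Second, I will read off the angular measure using the polar decomposition of Section~\ref{regvar}. For $r_0>0$ and a Borel set $B \subset \mathbb{S}_{+}^{n-1}$, we have $\nu_{\bm{a}Z}(\{\bm{x}: \Vert\bm{x}\Vert_2>r_0,\ \bm{x}/\Vert\bm{x}\Vert_2 \in B\}) = \Vert\bm{a}\Vert_2^{\alpha}\,\nu_Z((r_0,\infty])\,\mathbf{1}\{\bm{u}\in B\} = r_0^{-\alpha}\,\Vert\bm{a}\Vert_2^{\alpha}\,\delta_{\bm{u}}(B)$. Comparing this with $\nu(dr\times d\bm{w}) = \alpha r^{-\alpha-1}dr\,d\mathcal{H}(\bm{w})$, which gives $r_0^{-\alpha}\mathcal{H}(B)$ for the same set, yields $\mathcal{H}_{\bm{a}Z} = \Vert\bm{a}\Vert_2^{\alpha}\delta_{\bm{u}}$.

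The main obstacle is making the first step rigorous. Vague convergence is tested on relatively compact sets of $[0,\infty]^n\setminus\{0\}$, and the pushforward requires that $T^{-1}$ of such a set be relatively compact in $(0,\infty]$, together with $\nu_Z$-continuity of the boundaries. I would handle this directly rather than invoke an abstract mapping theorem. It suffices to verify convergence on the generating class of polar rectangles $\{r>r_0, \bm{w}\in B\}$ with $\delta_{\bm{u}}(\partial B)=0$, where the computation above is an exact identity for every $N$ up to the scalar tail $N\mathbb{P}(b_N^{-1}\Vert\bm{a}\Vert_2 Z>r_0)\to \Vert\bm{a}\Vert_2^{\alpha}r_0^{-\alpha}$. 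A Portmanteau-type argument on this convergence-determining class then closes the proof.
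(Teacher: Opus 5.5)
Your proposal is correct, but it identifies the angular measure by a different route than the paper.

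\textbf{The paper's route.} It uses Theorem~6.1 of \cite{resnick2010heavy} to test vague convergence only on complements $[0,\bm{x}]^c$ with $\bm{x}>0$. The union $\bigcup_{i:a_i>0}\{a_iZ>b_Nx_i\}$ collapses to the single scalar event $\{Z>b_N\min_{i:a_i>0}x_i/a_i\}$. The limit is therefore $\max_i a_i^{\alpha}x_i^{-\alpha}=\Vert\bm{a}\Vert_2^{\alpha}\max_i x_i^{-\alpha}(a_i/\Vert\bm{a}\Vert_2)^{\alpha}$. The paper then recognizes $\mathcal{H}_{\bm{a}Z}$ by matching this with $\nu([0,\bm{x}]^c)=\int\max_i (w_i/x_i)^{\alpha}\,d\mathcal{H}(\bm{w})$.

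\textbf{Your route.} You exploit the fact that $\bm{a}Z$ lives on the single ray through $\bm{u}$, so its angle is deterministic. The limit measure is then visibly the product $\alpha r^{-\alpha-1}dr\times\Vert\bm{a}\Vert_2^{\alpha}\delta_{\bm{u}}$, and $\mathcal{H}$ is read directly off the defining polar decomposition.

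\textbf{What each buys.} The paper's choice needs only the rectangle-complement criterion and stays on the class that underlies the exponent-function description of max-stable limits. However, its final identification tacitly uses that $\bm{x}\mapsto\int\max_i(w_i/x_i)^{\alpha}\,d\mathcal{H}(\bm{w})$ determines $\mathcal{H}$, and it has to track the zero coordinates of $\bm{a}$ by hand. Your route needs no such inversion step or zero-coordinate bookkeeping. The cost is an appeal either to the polar-coordinate characterization of multivariate regular variation (also in \cite{resnick2010heavy}) or to the boundary check for the pushforward under $T$.

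\textbf{Redundancy.} Your first step (Proposition~\ref{prop:prod} in dimension one plus the pushforward under $T$) and your closing verification on polar rectangles are alternative justifications; either alone suffices. The second is nearly immediate, since $N\mathbb{P}(b_N^{-1}\Vert\bm{a}\Vert_2 Z>r_0)\to\Vert\bm{a}\Vert_2^{\alpha}r_0^{-\alpha}$ is just the hypothesis at $z=r_0/\Vert\bm{a}\Vert_2$.
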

\begin{proof}[Proof of Corollary~\ref{cor:scalar}]
Theorem 6.1 in \cite{resnick2010heavy} guarantees that it is sufficient to prove convergence on sets $[0,\bm{x}]^{c}$, $\bm{x}>0$, to ensure convergence on $M_{+}\left( [0,\infty]^n \backslash \{0\} \right)$.
\begin{align*}
N\mathbb{P}(b_N^{-1} (\bm{a} Z ) \in [0, \bm{x}]^c)
 & = N\mathbb{P} \left( \bigcup_{i: a_i > 0} \{ b_N^{-1} (a_i  Z ) > x_i \} \right)
 = N\mathbb{P} \left( \bigcup_{i: a_i > 0} \{ Z  > b_N a_i^{-1} x_i \} \right) \\
 & = N\mathbb{P} \left( b_N^{-1}  Z  > \min_{i: a_i > 0} a_i^{-1} x_i  \right)
 \rightarrow  \left( \max_{i: a_i > 0} a_i x^{-1}_i  \right)^{\alpha}
 = \max_{i=1,\dots,n} a_i^{\alpha} x_i^{-\alpha} \\
 & = \Vert \bm{a} \Vert_2^{\alpha} \max_{i=1,\dots,n} x_i^{-\alpha} \left( a_i /  \Vert \bm{a} \Vert_2 \right)^{\alpha} \text{.}
\end{align*}
Then $\nu_{\bm{X}} ([0, \bm{x}]^c) = \displaystyle\int_{\mathbb{S}_{+}^{n-1}}  \max_{i=1,\dots,n} (w_i^{\alpha} / x_i^{\alpha}) d\mathcal{H}_{\bm{X}}(\bm{w})$, and we can identify
\[
\mathcal{H}_{\bm{a} Z}(\cdot)  =  \Vert \bm{a} \Vert_2^{\alpha} \delta_{\bm{a} / \Vert \bm{a} \Vert_2 }(\cdot) \text{.}
\]
\end{proof}

\begin{corollary}\label{cor} Let \(A = \left(\bm{a}_1, \dots,\bm{a}_k \right) \in \mathbb{R}_{+}^{n \times k}\) be a matrix with \(\max_{i = 1,\dots,n} a_{ij} > 0\), for all \(j = 1, \dots, k\) and let \(\bm{Z} = (Z_1, \dots, Z_k)^\top\) be a vector of independent and identically distributed regularly varying \(\alpha\) random variables with \(\{ b_N \}_{N \geq 1} \subset \mathbb{R}\) such that
\[
N\mathbb{P}(b_N^{-1} Z_j > z) \rightarrow z^{-\alpha} \text{, for all } j = 1,\dots,k \text{.}
\]
Then \(A \bm{Z} = \sum_{i=1}^{k} \bm{a}_i Z_i \in RV_{+}^n(\alpha)\) and, when normalized by \(\{ b_N \}\),
\[
\mathcal{H}_{A \bm{Z}} (\cdot) = \sum_{j=1}^{k} \Vert \bm{a}_j \Vert_2^{\alpha} \delta_{\bm{a}_j / \Vert \bm{a}_j \Vert_2 }(\cdot) \text{.}
\]
\end{corollary}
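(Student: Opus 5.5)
The proof combines the two elementary propagation rules already established. First, for each $j$ we apply Corollary~\ref{cor:scalar} to the single-factor term $\bm{a}_j Z_j$. The hypothesis $\max_i a_{ij}>0$ holds, and $Z_j$ is normalized by the common sequence $\{b_N\}$. This gives $\bm{a}_j Z_j \in RV_+^n(\alpha)$ with a limit measure $\nu_j$ whose angular part is the point mass $\|\bm{a}_j\|_2^\alpha\,\delta_{\bm{a}_j/\|\bm{a}_j\|_2}$. Equivalently, in polar form,
\[
\nu_j(dr\times d\bm{w}) = \alpha r^{-\alpha-1}dr\,\|\bm{a}_j\|_2^\alpha\,\delta_{\bm{a}_j/\|\bm{a}_j\|_2}(d\bm{w}).
\]

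Second, we sum the terms. The summands $\bm{a}_1Z_1,\dots,\bm{a}_kZ_k$ are independent because the $Z_j$ are independent, and they all regularly vary with the same normalizing sequence $\{b_N\}$; this is the place where identical distribution (or at least identical tail normalization) is used. We therefore apply Proposition~\ref{prop:sum} inductively on $k$. At step $m$ we combine the partial sum $\sum_{j<m}\bm{a}_jZ_j$, which by induction lies in $RV_+^n(\alpha)$ with limit $\sum_{j<m}\nu_j$ under $\{b_N\}$, with the independent term $\bm{a}_mZ_m$. This yields $A\bm{Z}\in RV_+^n(\alpha)$ and
\[
N\mathbb{P}(b_N^{-1}A\bm{Z}\in\cdot)\xrightarrow{v}\sum_{j=1}^k\nu_j .
\]

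Third, we read off the angular measure. Every $\nu_j$ has the same radial factor $\alpha r^{-\alpha-1}dr$, so their sum also factorizes in polar coordinates. Its angular component is $\sum_j\|\bm{a}_j\|_2^\alpha\delta_{\bm{a}_j/\|\bm{a}_j\|_2}$. By uniqueness of the polar decomposition of a limit measure with the scaling property, this is $\mathcal{H}_{A\bm{Z}}$. As a cross-check, we can verify the identity on the determining sets $[0,\bm{x}]^c$ used in the proof of Corollary~\ref{cor:scalar} (Theorem~6.1 of \cite{resnick2010heavy}). There the sum gives $\sum_j\max_i a_{ij}^\alpha x_i^{-\alpha}$, which matches $\int\max_i(w_i/x_i)^\alpha\,d\mathcal{H}$ for the claimed discrete $\mathcal{H}$.

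The main obstacle is justifying Proposition~\ref{prop:sum} in the form needed. Its content is that independent regularly varying summands do not jointly produce extremes at the scale $b_N$: $N\mathbb{P}(b_N^{-1}X_1>\epsilon,\ b_N^{-1}X_2>\epsilon)=N\,O(N^{-2})\to0$. We can take this as given from the cited theorem. The only remaining check is that the common-sequence hypothesis holds, which follows because each $Z_j$ satisfies $N\mathbb{P}(b_N^{-1}Z_j>z)\to z^{-\alpha}$ with the same $b_N$.
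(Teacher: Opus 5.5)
Your proposal is correct and follows the same route as the paper: apply Corollary~\ref{cor:scalar} to each summand $\bm{a}_jZ_j$, then use the additivity of limit measures from Proposition~\ref{prop:sum} to conclude $\mathcal{H}_{A\bm{Z}}=\sum_j\mathcal{H}_{\bm{a}_jZ_j}$. You also make explicit two steps the paper leaves implicit: the induction over $k$, using independence of each partial sum from the next term, and the linearity of the map from the limit measure to its angular measure.
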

\begin{proof}[Proof of Corollary~\ref{cor}]
From Corollary~\ref{cor:scalar},
\(\mathcal{H}_{\bm{a}_j  Z_j } (\cdot) =  \Vert \bm{a}_j \Vert_2^{\alpha} \delta_{ \bm{a}_j  / \Vert \bm{a}_j \Vert_2 }(\cdot)\) for all \(j = 1,\dots,k\).
Using $A \bm{Z} = \sum_{j=1}^{k} \bm{a}_j Z_j $ and the additivity of limiting measures for independent regularly varying summands (Proposition~\ref{prop:sum}),
\[
\mathcal{H}_{A \bm{Z}}(\cdot) =\sum_{j=1}^{k} \mathcal{H}_{\bm{a}_j Z_j}(\cdot)  = \sum_{j=1}^{k} \Vert \bm{a}_j \Vert_2^{\alpha} \delta_{\bm{a}_j / \Vert \bm{a}_j \Vert_2 }(\cdot) \text{.}
\]
\end{proof}

\begin{proof}[Proof of Theorem~\ref{thm:snmf_kmeans}]
We have \(\bm{M} = \bm{X}\bm{B}\bm{D}^{-1}\), and the cost function can be rewritten as
\begin{align*}
J_K &= \sum_{k=1}^{K}\sum_{i \in C_k} \lVert \bm{x}_i - \bm{m}_k \rVert^2
= \lVert \bm{X} - \bm{M}\bm{B}^\top \rVert_F^2 \\
&= \lVert \bm{X} - \bm{X}\bm{B}\bm{D}^{-1}\bm{B}^\top \rVert_F^2
= \lVert \bm{X} (\bm{I}_n - \bm{B}\bm{D}^{-1}\bm{B}^\top) \rVert_F^2 \\
&= \operatorname{Tr}\left((\bm{I}_n - \bm{B}\bm{D}^{-1}\bm{B}^\top)^\top \bm{X}^\top \bm{X} (\bm{I}_n - \bm{B}\bm{D}^{-1}\bm{B}^\top)\right)
= \operatorname{Tr}\left((\bm{I}_n - \bm{B}\bm{D}^{-1}\bm{B}^\top)\bm{X}^\top \bm{X}\right) \\
&= \operatorname{Tr}(\bm{X}^\top \bm{X}) - \operatorname{Tr}(\bm{B}\bm{D}^{-1}\bm{B}^\top \bm{X}^\top \bm{X})
= \operatorname{Tr}(\bm{X}^\top \bm{X}) - \operatorname{Tr}\left((\bm{X}\bm{B}\bm{D}^{-1/2})^\top (\bm{X}\bm{B}\bm{D}^{-1/2})\right).
\end{align*}
Let \(\bm{L} \coloneqq \bm{B}\bm{D}^{-1/2} \in \mathbb{R}_+^{n \times K}\). Then \(\bm{L}^\top \bm{L} = \bm{D}^{-1/2}\bm{B}^\top \bm{B}\bm{D}^{-1/2} = \bm{I}_K\), so
\[
J_K = \operatorname{Tr}(\bm{X}^\top \bm{X}) - \operatorname{Tr}(\bm{L}^\top \bm{X}^\top \bm{X}\bm{L}).
\]
Under \(\bm{L}^\top \bm{L} = \bm{I}_K\) and \(\bm{L} \geq 0\),
\[
\min J_K \iff \max \operatorname{Tr}(\bm{L}^\top \bm{X}^\top \bm{X}\bm{L})
\iff \min \lVert \bm{X}^\top \bm{X} - \bm{L}\bm{L}^\top \rVert_F^2.
\]
\end{proof}

\begin{proof}[Proof of Theorem~\ref{thm:snmf_spkmeans}]
\emph{(1).}
For unit-norm data \(\lVert \tilde{\bm{x}}_i \rVert = 1\) and normalized centroids \(\tilde{\bm{c}}_k = \bm{s}_k / \lVert \bm{s}_k \rVert\),
\[
S_K = \sum_{k=1}^{K}\sum_{i \in C_k}\!\bigl(1 - \tilde{\bm{x}}_i^\top \tilde{\bm{c}}_k\bigr)
    = n - \sum_{k=1}^{K} \bm{s}_k^\top \frac{\bm{s}_k}{\lVert \bm{s}_k \rVert}
    = n - \sum_{k=1}^{K} \lVert \bm{s}_k \rVert.
\]
\emph{(2).}
Define \(\tilde{\bm{L}} := \tilde{\bm{B}}\tilde{\bm{D}}^{-1/2}\).
As in the proof of Theorem~\ref{thm:snmf_kmeans}, \(\tilde{\bm{L}}^\top \tilde{\bm{L}} = \bm{I}_K\) and \(\tilde{\bm{L}} \geq 0\).
Applying the Euclidean K-means trace expansion to unit-norm data (Theorem~\ref{thm:snmf_kmeans} with \(\tilde{\bm{X}}\) in place of \(\bm{X}\)),
\[
\operatorname{Tr}(\tilde{\bm{L}}^\top \tilde{\bm{X}}^\top \tilde{\bm{X}} \tilde{\bm{L}})
= \sum_{k=1}^{K} \frac{\lVert \bm{s}_k \rVert^2}{n_k}.
\]
Minimizing \(\lVert \tilde{\bm{X}}^\top \tilde{\bm{X}} - \tilde{\bm{L}}\tilde{\bm{L}}^\top \rVert_F^2\) under the orthogonality constraint is equivalent to maximizing this trace.

\emph{Gap between the two objectives.}
(1)~maximizes \(\sum_k \lVert \bm{s}_k \rVert\); (2)~maximizes \(\sum_k \lVert \bm{s}_k \rVert^2 / n_k\).
By Cauchy--Schwarz, \(\sum_k \lVert \bm{s}_k \rVert^2 / n_k \geq (\sum_k \lVert \bm{s}_k \rVert)^2 / n\), with equality iff \(\lVert \bm{s}_k \rVert \propto n_k\).
The two objectives therefore differ only through the cardinality weighting \(1/n_k\), which is nearly common across clusters for balanced partitions, leaving them closely aligned in practice.
\end{proof}
}

\subsection{Benchmark Data Generation}\label{app:matrices}

Storage for the dense $n \times n$ input scales as:

\begin{center}
\begin{tabular}{rrcc}
\toprule
\textbf{$n$} & \textbf{$k = \lfloor\sqrt{n}\rfloor$} & \textbf{$S$ (fp32)} & \textbf{$H$ (fp32)} \\
\midrule
100 & 10 & 40\,KB & 4\,KB \\
$1{,}000$ & 31 & 4\,MB & 124\,KB \\
$10{,}000$ & 100 & 400\,MB & 4\,MB \\
$100{,}000$ & 316 & 40\,GB & 126\,MB \\
$1{,}000{,}000$ & $1{,}000$ & 4\,TB & 4\,GB \\
\bottomrule
\end{tabular}
\end{center}

At $n = 10^5$, $S$ still fits on a single 192\,GB GPU under the trace reformulation (Table~\ref{tab:memory}); Phase~2 shards across four GPUs for throughput. At $n = 10^6$, $S$ is 4\,TB and requires multi-node distribution across 16 nodes (64 GPUs).

\paragraph{Factor model and mixing matrix.}
Both types share $A \in \mathbb{R}_+^{n \times (k+1)}$: column~0 is a common factor (mean loading $0.8$, Gaussian noise $\sigma = 0.15$ on all rows); columns $1,\dots,k$ are group factors with strong on-block loadings (uniform in $[0.6,1.8]$), off-block noise $\sigma = 0.05$, and block sizes $\pm 40\%$ around $n/k$. Entries are clamped to $\mathbb{R}_+$; the resulting $A$ has full column rank $k{+}1$ in every generated instance.

Before computing $S$, the rows of $A$ (and the corresponding ground-truth labels) are randomly permuted. Permuting $A$ early rather than the $n \times n$ output avoids an $O(n^2)$ random-access shuffle on the final matrix when $S$ is memory-mapped. Each matrix is stored with its permutation vector and ground-truth cluster labels.

\paragraph{TPDM construction.}
We generate $n_{\mathrm{sample}} = 40{,}000$ synthetic observations via $X = AZ + \varepsilon$, where $Z \in \mathbb{R}^{(k+1) \times n_{\mathrm{sample}}}$ has iid $\text{Pareto}(\alpha = 2)$ entries and $\varepsilon$ has iid $0.3 \cdot \text{Pareto}(\alpha = 2)$ entries. Each observation $X_j$ is mapped to polar coordinates $(r_j, \bm{w}_j) = (\|X_j\|, X_j / \|X_j\|)$. The $n_{\mathrm{exc}} = \lfloor q \cdot n_{\mathrm{sample}} \rfloor$ observations with largest radial norm $r_j$ are retained as exceedances ($q = 0.01$), and their angular components form the rows of $\Omega \in \mathbb{R}^{n_{\mathrm{exc}} \times n}$ (Section~\ref{estimation}). The empirical TPDM is $\hat{\Sigma} = (n / n_{\mathrm{exc}})\, \Omega^\top \Omega$, normalized to unit diagonal. By Corollary~\ref{cor}, under the noiseless factor model the population TPDM is $\Sigma = D^{-1/2}AA^\top D^{-1/2}$ with $D_{ii} = (AA^\top)_{ii}$ (recall $\alpha = 2$); with the same-index Pareto noise used here, this is the leading factor contribution to which the estimator converges as $n_{\mathrm{sample}} \to \infty$. Fixing $n_{\mathrm{sample}}$ (hence $n_{\mathrm{exc}} = 400$) makes large-$n$ TPDM hardness mix spectral flattening with a fixed exceedance budget.

\paragraph{Correlation construction.}
Using the same mixing matrix $A$ and $n_{\mathrm{sample}} = 40{,}000$, we generate $X = AZ + \varepsilon$ with $Z \sim \mathcal{N}(0, I_{k+1})$ and $\varepsilon \sim \mathcal{N}(0, I_n)$. The Pearson sample correlation is $C_{ij} = \hat{\mathrm{Cov}}(X_i, X_j) / (\hat\sigma_i \hat\sigma_j)$. Since $A \ge 0$ and $Z$, $\varepsilon$ are zero-mean Gaussian, the population covariance $AA^\top + I_n$ has non-negative off-diagonals, so sample negatives arise only from finite-sample noise. The SymNMF input is $S_{ij} = |C_{ij}|$, symmetric, non-negative, unit-diagonal, and valued in $[0, 1]$.

\paragraph{Rank choice.}
The generative rank is $k^* = \lfloor\sqrt{n}\rfloor + 1$; the factorization rank is $k = \lfloor\sqrt{n}\rfloor$, one below, so solvers must absorb the common factor into the group columns of $H$ (Section~\ref{snmf:matrices}). The fixed-rank stress test $k_{\mathrm{fix}} = 25$ is likewise defined there.

\paragraph{Scalability.}
At $n \le 10{,}000$, all intermediates fit in GPU RAM. At $n \ge 10^5$, the $n \times n_{\mathrm{sample}}$ intermediate is generated in row-chunks and the outer product ($\Omega^\top\Omega$ for TPDM, covariance tiles for correlation) is accumulated tile-by-tile into a memory-mapped file; no $n \times n$ intermediate is materialized. GPU-accelerated tiling uses deterministic per-block seeds for reproducibility.

\paragraph{Ledoit--Wolf shrinkage.}
When $n / n_{\mathrm{sample}} > 0.1$, the sample correlation is ill-conditioned. Ledoit--Wolf shrinkage \cite{ledoit2004wellconditioned} replaces it by $S_{\mathrm{shrunk}} = (1-\lambda_{\mathrm{LW}})S + \lambda_{\mathrm{LW}} \mu I$ with $\mu = \operatorname{tr}(S)/n$ and analytically estimated intensity~$\lambda_{\mathrm{LW}}$. The TPDM is not a sample covariance, so Ledoit--Wolf does not apply.

\subsection{Detailed Algorithm Descriptions}\label{algo_detail}

All solvers minimize $f(H) = \|S - HH^\top\|_F^2$ with $H \in \mathbb{R}_+^{n \times k}$, using the gradient $\nabla_H f = 4(H(H^\top H) - SH)$ from Section~\ref{snmf:loss}. In projected mode, every update is followed by the element-wise clamp $[X]_+ = \max(X, 0)$; in LogSpace mode, the optimizer acts on the unconstrained variable $\Theta$ with $H = \operatorname{softplus}(\Theta)$ (Section~\ref{snmf:logspace}). For each family we give the update rule and its per-iteration cost.

\paragraph{Family 1: Multiplicative / Mirror Descent.}
The dampened MU rule of \cite{he2011symmetric} is
\begin{equation*}
H \leftarrow \tfrac{1}{2}\,H \odot \bigl(\mathbf{1} + SH \oslash (H(H^\top H) + \epsilon)\bigr),
\end{equation*}
identical to~\eqref{eq:mu_update}, where $\odot$ and $\oslash$ denote element-wise multiplication and division, and $\epsilon$ is fp32 machine epsilon (${\approx}\,1.2\times 10^{-7}$), matching the implementation default.
MU preserves $H \ge 0$ by construction and guarantees monotonic decrease of an auxiliary function \cite{he2011symmetric}. We test four variants: random initialization, NNDSVD \cite{boutsidis2008svd} initialization, trace-norm regularization ($\lambda(\operatorname{tr}(S) - \|H\|_F^2)^2$), and mirror descent (exponentiated gradient $H \leftarrow H \odot \exp(-\eta\nabla_H f)$).

\paragraph{Family 2: Projected Gradient Methods.}
Projected Gradient Descent (PGD) with Armijo backtracking performs
\[
H^{(t+1)} = \bigl[H^{(t)} - \eta_t \nabla_H f(H^{(t)})\bigr]_+,
\]
where $\eta_t$ is reduced by factor $\beta = 0.5$ until the sufficient-decrease condition $f(H^{(t+1)}) \le f(H^{(t)}) - c\,\eta_t \|\nabla_H f\|_F^2$ holds ($c = 10^{-4}$).

Accelerated Proximal Gradient (APG / FISTA) \cite{beck2009fast} introduces the extrapolation step
\[
Y^{(t)} = H^{(t)} + \frac{\tau_t - 1}{\tau_{t+1}}(H^{(t)} - H^{(t-1)}), \qquad
\tau_{t+1} = \tfrac{1}{2}(1 + \sqrt{1 + 4\tau_t^2}),
\]
and applies PGD to $Y^{(t)}$. Adaptive restart \cite{odonoghue2015adaptive} resets $\tau$ to 1 whenever $f(H^{(t+1)}) > f(H^{(t)})$, preventing oscillation in the non-convex setting. Achieves $O(1/t^2)$ rate for convex problems. Heavy-ball momentum on projected PGD was tested and excluded (momentum fights the non-negativity clamp; see skipped solvers below).

\paragraph{Family 3: Adaptive First-Order Methods.}
All methods in this family maintain per-element running statistics of the gradient and use them to scale the learning rate. Letting $g_t = \nabla_H f(H^{(t)})$:

\emph{AdaGrad} \cite{duchi2011adagrad}: $G_t = G_{t-1} + g_t \odot g_t$, \;
$H^{(t+1)} = [H^{(t)} - \eta\, g_t \oslash (\sqrt{G_t} + \epsilon)]_+$.
The cumulative sum $G_t$ yields a monotonically decreasing effective learning rate. Benchmark: $\eta = 2.0$.

\emph{Piecewise AdaGrad}: between resets, identical to AdaGrad above. On objective stagnation (criterion~3, Section~\ref{snmf:convergence}), fire a reset only while the per-element projected gradient remains well above the KKT gate ($\lVert\nabla_{\mathrm{proj}}\rVert_F/(nk) > c_r\,\tau_g$, with $c_r \in \{3,10,30\}$) and a median-squared-gradient floor holds ($\operatorname{median}(g_t^2)>\epsilon$), then recalibrate:
$G_i \leftarrow \max\bigl(c \cdot g_{t,i}^2,\; c \cdot \mathrm{median}(g_t^2)\bigr)$.
The median floor prevents $G_i \to 0$ on near-zero-gradient entries; the gradient-margin gate escapes large-scale plateaus early and self-terminates near convergence, replacing an absolute $E_t$ cap that fails at $n=10^6$. After a reset, stagnation detection stays off until $\sum_i G_i$ doubles from its post-reset value. A subsequent reset fires only if the previous one produced measurable loss reduction; otherwise resets are permanently disabled. Benchmark: $\eta = 1.0$, $c = 10$.

\emph{RMSprop} \cite{hinton2012rmsprop}: $v_t = \alpha\, v_{t-1} + (1-\alpha)\, g_t \odot g_t$, \;
$m_t = \mu\, m_{t-1} + \eta\, g_t \oslash (\sqrt{v_t} + \epsilon)$, \;
$H^{(t+1)} = [H^{(t)} - m_t]_+$.
Benchmark: $\eta = 0.05$, $\alpha = 0.99$, $\mu = 0.5$.

\emph{Adam} \cite{kingma2014adam}: $m_t = \beta_1 m_{t-1} + (1-\beta_1) g_t$, \;
$v_t = \beta_2 v_{t-1} + (1-\beta_2) g_t \odot g_t$, \;
$\hat{m}_t = m_t / (1-\beta_1^t)$, \; $\hat{v}_t = v_t / (1-\beta_2^t)$, \;
$H^{(t+1)} = [H^{(t)} - \eta\, \hat{m}_t \oslash (\sqrt{\hat{v}_t} + \epsilon)]_+$.
Benchmark: $\eta = 0.01$, $\beta_1 = 0.9$, $\beta_2 = 0.999$.
NAdam \cite{dozat2016incorporating} replaces $\hat{m}_t$ with $\beta_1 \hat{m}_t + (1-\beta_1)g_t/(1-\beta_1^t)$ (Nesterov look-ahead). Adan \cite{xie2023adan} maintains three buffers with Nesterov-style gradient differences.

\emph{Row-Stochastic AdaGrad}: sample a row index set $I\subset[n]$ and form the exact partial gradient on those rows (cost $O(|I|\,nk)$). Plain sub-sampling leaves unsampled rows with stale $G_i$ and fails the KKT gate; \emph{Row-Stochastic SVRG} repairs this with a periodic full-gradient snapshot $\mu=\nabla_H f(H)$: fresh gradients on $I$, snapshot rows elsewhere, so every row of $H$ and of $G$ updates each iteration while retaining $O(|I|\,nk)$ cost between snapshots. Phase~1/2 anchor: $(\eta,\,|I|/n)=(2.0,\,0.5)$.

\emph{Entry-stochastic AdaGrad}: sample $m$ pairs $(i,j)$ and accumulate the per-entry contributions
\begin{equation*}
  g_i \mathrel{+}= 2\bigl(h_i^\top h_j - S_{ij}\bigr)\,h_j, \qquad
  g_j \mathrel{+}= 2\bigl(h_i^\top h_j - S_{ij}\bigr)\,h_i
\end{equation*}
(cost $O(mk)$; unbiased for $\nabla_H f$ up to a global factor absorbed by~$\eta$). Fixed small $m/n^2$ fails the full-gradient KKT gate (Section~\ref{snmf:perf}); Appendix~\ref{app:failed} records the variance-reduction attempts.

\emph{Block-SVRG \textsc{AdaptGrow}}: sample index sets $I,J$ with $|I|=|J|=\lceil\sqrt{\phi}\,n\rceil$ and form the dense $I\times J$ residual tile via cuBLAS ($R_{IJ}=H_I H_J^\top - S_{IJ}$), contributing $g_I \mathrel{+}= 2 R_{IJ} H_J$ and $g_J \mathrel{+}= 2 R_{IJ}^\top H_I$. Coverage is $\phi n^2$ entries per step. Grow $\phi\leftarrow\min(2\phi,1)$ when objective stagnation persists for a short window while $\lVert\nabla_{\mathrm{proj}}\rVert_F/(nk)$ remains above a multiple of~$\tau_g$ (norm-test style; Section~\ref{snmf:perf}); once $\phi=1$ the step is full-batch AdaGrad. An SVRG hybrid uses a full-gradient snapshot as the placeholder on rows outside $I\cup J$, which prevents per-row accumulator heterogeneity during the stochastic phase. Benchmark anchors: $\phi_0\in\{0.5,0.75\}$ with snapshot period ${\sim}1/\phi$.

\paragraph{Family 4: Block Coordinate Descent.}
Both methods use the two-factor formulation of \cite{kim2014algorithms}:
\[
\min_{H,W \ge 0} \|S - WH^\top\|_F^2 + \alpha \|H - W\|_F^2,
\]
where $\alpha \gg 1$ couples $W \approx H$. Each sub-problem is then a standard (asymmetric) NMF update.

\emph{HALS} \cite{kimpark2008}: Updates one column $h_j$ of $H$ at a time. Letting $R_j = S - \sum_{l \ne j} w_l h_l^\top$ denote the residual with column $j$ removed,
\[
h_j \leftarrow \Bigl[\frac{R_j^\top w_j + \alpha\, w_j}{\|w_j\|^2 + \alpha}\Bigr]_+,
\]
with $W$ held fixed at the previous $H$ for a full column sweep of~$H$, then refreshed ($W\leftarrow H$) on the next outer iteration. Incremental residual updates avoid recomputing $R_j$ from scratch. Per-iteration cost is $O(n^2 k)$. Benchmark: $\alpha \in \{10^2, 10^3, 10^4\}$.

\emph{ANLS} \cite{kim2014algorithms}: Solves the $H$-update as a non-negative least squares problem by forming
\[
H^\top \leftarrow \arg\min_{\tilde{H} \ge 0} \Bigl\|\begin{bmatrix} W \\ \sqrt{\alpha}\, I_k \end{bmatrix} \tilde{H} - \begin{bmatrix} S \\ \sqrt{\alpha}\, W^\top \end{bmatrix}\Bigr\|_F^2,
\]
requiring a $k \times k$ Cholesky factorization per column and $O(n^2)$ memory for the stacked system. Benchmark: $\alpha \in \{10^2, 10^3\}$.

\paragraph{Family 5: Second-Order Methods.}
\emph{Diagonal Newton} with Levenberg--Marquardt regularization: at each iteration, compute the diagonal of the Hessian $d_{ij} = [\nabla^2 f]_{ij,ij}$ and step
\[
H^{(t+1)} = \Bigl[H^{(t)} - \eta\, \frac{[\nabla_H f]_{ij}}{|d_{ij}| + \lambda_{\mathrm{reg}}}\Bigr]_+,
\]
with Armijo backtracking on $\eta$. The absolute value $|d_{ij}|$ handles indefinite Hessian entries from saddle points. Per-iteration cost: $O(n^2 k)$. Benchmark: $\eta = 0.1$, $\lambda_{\mathrm{reg}} = 0.5$.

\emph{L-BFGS} \cite{nocedal2006numerical}: Approximates the inverse Hessian via $m$ stored gradient pairs $(s_i, y_i)$ using the two-loop recursion. Each line-search evaluation costs $O(n^2 k)$, and the overall cost is $O(m \cdot n^2 k)$ per outer step. Benchmark: $m = 100$, with projected Armijo line search.

\emph{PGNCG} \cite{eswar2020pgncg}: Projected Gauss--Newton with truncated CG (basis of PLANC~\cite{eswar2021planc}). Each outer step solves a linearized least-squares subproblem in the free variables by CG; per-iteration cost remains $O(n^2 k)$ plus CG matvecs. Under our three-criterion gate it reaches only 11/18 runs and no $n = 10{,}000$ run (Section~\ref{snmf:perf}).

\paragraph{Family 6: Splitting Methods.}
\emph{ADMM}: We split non-negativity from the factorization via an auxiliary factor $W$ and scaled dual $U$, with consensus $H = W$ and $W \ge 0$. Each iteration is
\begin{align}
H^{(t+1)} &= \bigl(2SW^{(t)} + \rho(W^{(t)} - U^{(t)})\bigr)\bigl(2{W^{(t)}}^\top W^{(t)} + \rho\, I_k\bigr)^{-1}, \label{eq:admm_H}\\
W^{(t+1)} &= \bigl[H^{(t+1)} + U^{(t)}\bigr]_+, \label{eq:admm_W}\\
U^{(t+1)} &= U^{(t)} + H^{(t+1)} - W^{(t+1)}, \label{eq:admm_U}
\end{align}
i.e.\ an unconstrained least-squares $H$-update for the bilinear surrogate $\|S - HW^\top\|_F^2$ with consensus penalty, followed by the Euclidean projection of $H+U$ onto $\mathbb{R}_+^{n \times k}$. Convergence and reported solutions are tracked on the feasible factor~$W$. The $H$-update inverts a single $k \times k$ matrix (cost $O(k^3)$) and one $O(n^2 k)$ product $SW$. Benchmark: $\rho \in \{50, 500\}$.

\emph{Randomized LAI-SymNMF} \cite{hayashi2024randomized}: Computes a rank-$k$ approximate eigendecomposition $S \approx U\Lambda U^\top$ by a randomized range finder (cost $O(n^2 k)$) and then runs two-block HALS on $n \times k$ factors $(H,W)$ with the cheap matvec $U(\Lambda \odot (U^\top H))$ in place of $SH$, using the same $\alpha$-coupling as Family~4. Optional refinement on the full $S$ was tested and disabled in the reported runs ($0$ refine iterations). Benchmark: $\alpha = 100$.

\paragraph{Family 7: Deep Unfolding.}
\emph{SymNMF-Net} \cite{li2022symncf}: Unfolds $T$ iterative updates into a trainable network with learnable parameters $\{P_t, \lambda\}$. Each block consists of an inversion layer $(H^\top H + \lambda I)^{-1}$, a linear layer with parameter $P_t$, and a ReLU activation for non-negativity. The network is trained from scratch for each input~$S$ using Adam on the reconstruction loss, yielding $T$ blocks of cost $O(n^2 k)$ each per training iteration. Benchmark: $T = 5$, $\eta = 0.1$.

\paragraph{Skipped solvers and documented failure modes.}
Two solver configurations were excluded from the benchmark after initial testing revealed structural (not hyperparameter) failures; a third, ANLS, is benchmarked but does not scale:
\begin{itemize}[leftmargin=2em,topsep=2pt,itemsep=1pt]
\item \textbf{PGD + Momentum} (projected): Momentum carries $H$ past zero; the projection clamps it back to $0$ while the momentum buffer retains the negative velocity. This feedback loop drives $H \to 0$ within ${\sim}10$ iterations ($E_t = 1.0$).
\item \textbf{Shampoo} \cite{gupta2018shampoo}: Tested via \texttt{torch\_optimizer.Shampoo} in projected and LogSpace modes. For $H\in\mathbb{R}^{n\times k}$ the left Kronecker factor is $n\times n$, so each matrix-root update is $O(n^3)$ and already impractical at our scales. Early runs also failed (projected: NaN in the matrix root by $n\ge 100$; LogSpace: no useful progress on $E_t$). Excluded on cost; the failures are secondary.
\item \textbf{ANLS} converges at $n \le 1{,}000$ (Section~\ref{results}) but is impractical beyond it: the stacked least-squares system carries the $n \times n$ block $S$ on its right-hand side, costing $O(n^2)$ additional memory and ${\sim}10$\,s per outer iteration at $n = 10{,}000$, where it exhausts the iteration budget without converging, leaving it uncompetitive with adaptive methods that finish in $< 1$\,s total.
\end{itemize}

\subsection{Sub-quadratic SymNMF: failed approaches and structural constraints}\label{app:failed}

This appendix records the exploratory analysis behind Block-SVRG \textsc{AdaptGrow} (Section~\ref{snmf:perf}): why fixed-fraction entry sampling fails the KKT gate under variance reduction, why low-rank sketching is not a substitute, and why adaptive batch growth is specific to entry-stochastic AdaGrad.

\paragraph{Variance reduction at fixed entry fraction.}
Section~\ref{snmf:perf} frames the fixed-fraction failure as a criterion mismatch: updates use sparse entry gradients, while certification requires the exact full gradient below~$\tau_g$. The per-iteration entry budget therefore caps the achievable KKT residual at a fixed fraction $m/n^2$, independent of how that budget is aggregated. We tested three variance-reduction strategies; none closes the gap.

\emph{SVRG variance reduction.} By analogy with the row-sampled SVRG variant, we periodically compute a full-gradient snapshot $\mu = \nabla_H f(H)$ and construct a hybrid gradient: fresh entry-sampled estimates on touched rows, stale $\mu$ elsewhere \cite{johnson2013svrg}. Near a stationary point $H^*$ the stale components carry systematic bias proportional to $\|H - H_{\mathrm{snap}}\|$. For the oscillation amplitude to fall below the KKT threshold, snapshots must be refreshed every $10$--$100$ iterations at $O(n^2 k)$ cost each, at which point the amortized cost approaches full-batch AdaGrad.

\emph{Polyak--Ruppert iterate averaging.} Suffix averaging \cite{polyak1992acceleration, ruppert1988efficient} maintains $\bar{H}_T = (1/T') \sum_{t=T-T'}^{T} H^{(t)}$ after a warm-up triggered at $E_t < 0.1$. Averaging reduces objective variance, but the projected-gradient norm of the averaged iterate $\bar H$ remains far above threshold because $\nabla f(\bar H) \ne \overline{\nabla f(H^{(t)})}$ for non-linear $f$.

\emph{Dual averaging.} The ADAGRAD-RDA framework of \cite{xiao2010dual} accumulates gradients in a dual variable $Z_t = \sum_{s=1}^{t} g_s$ and recovers the primal iterate via a proximal step:
\begin{equation}\label{eq:dual_avg}
  H^{(t)} = \arg\min_{H \ge 0} \Bigl\{ \langle Z_t, H \rangle
    + \frac{1}{\eta}\sum_{i,j}\sqrt{G_{t,ij}}\,H_{ij}^2
    + \frac{\delta\sqrt{t}}{2}\|H\|_F^2 \Bigr\},
\end{equation}
where $G_t$ is the AdaGrad accumulator and $\delta\sqrt{t}$ prevents divergence. The proximal step enforces $H \ge 0$ without the gradient--projection conflict of primal methods. However, $\nabla_H f(0) = 0$, so $H = 0$ is a critical point of the SymNMF objective, and the proximal recovery initializes $H$ near zero. A primal-AdaGrad warm-up escapes this neighborhood before transitioning to dual accumulation, but the KKT residual remains large under the same fixed entry budget.

\paragraph{Why sketching is not a natural alternative.}
A tempting sub-quadratic strategy is to replace $S$ with a rank-$r$ spectral truncation $\hat S = U\Lambda U^\top$ ($r \ll n$) and solve $\min_{H \ge 0} \|\hat S - HH^\top\|_F^2$ at cost $O(nrk)$ per iteration \cite{halko2011finding, woodruff2014sketching}. The gradient error is bias ($\|\nabla f_{\hat S} - \nabla f_S\| \le O(\|S - \hat S\|_F)$) rather than variance.
However, the dependence matrices here are not well approximated at small~$r$: correlation has a Marchenko--Pastur bulk with slowly decaying eigenvalues (Table~\ref{tab:spectral}), and the empirical TPDM, though rank-deficient with $\operatorname{rank}\le n_{\mathrm{exc}}=400$, concentrates its remaining spectrum on a flat plateau. A modest-rank sketch therefore leaves $\|S - \hat S\|_F$ above the KKT scale; raising $r$ to $O(\sqrt{n})$ only mildly subquadratizes the cost ($O(n^{3/2}k)$). The downstream user also typically needs the factorization of the true~$S$, not a surrogate.

\paragraph{Why growing-batch is specific to entry-stochastic AdaGrad.}
Two natural questions arise: (1)~can adaptive batch growth be applied to row-sampled stochastic AdaGrad, and (2)~can stochastic sampling be combined with RMSprop? Both answers are negative for structural reasons.

\emph{Row-sampled AdaGrad does not benefit from batch growth.} Naive row sampling fails the KKT gate through a mix of gradient noise and uneven accumulator growth: infrequently sampled rows develop stale $G_i$ and mismatched per-row rates $\eta/\sqrt{G_i}$. Row-SVRG already closes that gap at fixed $|I|/n$ by writing a hybrid gradient on \emph{every} row each step (fresh on~$I$, snapshot elsewhere), so $G$ grows uniformly without growing the batch. Batch growth would only erase the per-step saving that makes row-SVRG useful at Phase~2.

\emph{RMSprop's EMA is not missing-data stable under the same masking.} Write an unsampled coordinate as $g_{t,i}=0$ (the natural input to a dense adaptive optimizer). AdaGrad is invariant: $G_{t,i}=G_{t-1,i}+0$ freezes the rate. An EMA is not: $v_{t,i}=\alpha v_{t-1,i}+(1-\alpha)\,0=\alpha v_{t-1,i}$ decays on every miss, so the next touch is overscaled (at $\alpha=0.99$, roughly $\alpha^{-\Delta/2}$ after $\Delta$ misses). Skipping unsampled coordinates instead of writing zeros leaves both accumulators unchanged on those rows, but standard dense RMSprop implementations apply the EMA to the full tensor and therefore see the zeros. AdaGrad's additive update is the same under either convention; the EMA is not.

SVRG does not repair this for RMSprop. The hybrid step feeds a nonzero placeholder $\mu_i$ into every row each iteration, so the EMA is continuously driven by stale $\mu_i^2$ between snapshots. Under entry sampling the short EMA window also retains sampling variance that AdaGrad's $1/t$ average damps. After $\phi\to 1$, AdaGrad can keep accumulating on top of the stochastic history; an EMA still has to forget that history over its effective window.

In summary, the four stochastic variants in this work (row-sampled, entry-sampled, SVRG, \textsc{AdaptGrow}) are built on AdaGrad's monotone accumulator because it is stable under partial observation. RMSprop's exponential forgetting helps in the full-batch regime (Section~\ref{snmf:perf}) but rules it out here.

\paragraph{Dense-phase reset does not repair block-sampling heterogeneity.}
Block-structured sampling (Section~\ref{snmf:perf}) makes AdaGrad's accumulator $G_i$ heterogeneous across rows; the recommended fix is the SVRG hybrid snapshot, which smooths the per-row update density preemptively. We also tested a corrective alternative: once the batch is full ($\phi = 1$), on the next stagnation event recalibrate $G_i \leftarrow \max(c\,g_i^2,\, c\,\mathrm{median}(g^2))$ from a single fresh dense gradient (the Piecewise AdaGrad rule, applied only after the batch is full so the reset never bakes stochastic-phase heterogeneity into~$G$), with a cooldown until $G$ doubles and auto-disable if no objective progress follows. This does not resolve the heterogeneity: the reset fires on transient stagnation and disrupts the per-row scaling rather than repairing it. Stacking the reset on top of the SVRG hybrid is likewise counterproductive: it disrupts an already-coherent~$G$. The dense-phase reset is therefore at best a memory-constrained fallback when SVRG snapshots are infeasible, and should not be combined with the SVRG snapshot.

\clearpage

\bibliographystyle{unsrt} 
\bibliography{biblio}     

\end{document}